\newtheorem{prop}{Proposition}
\newcommand{\dd}{\mathop{}\! \mathrm{d}}
\title{Evasion Attacks Against Bayesian Predictive Models}
\author[1,2]{\href{mailto:<pablo.garcia@icmat.es>?Subject=Your UAI 2025 paper}{Pablo~G.~Arce}{}}
\author[3]{Roi~Naveiro}
\author[1]{David~Ríos~Insua}
\affil[1]{%
    Inst. Math. Sciences, Spanish Nat. Research Council\\
    Madrid, Spain
}
\affil[2]{%
    Universidad Autónoma de Madrid, Escuela de Doctorado\\
    Madrid, Spain
}
\affil[3]{%
    CUNEF Universidad\\
    Madrid, Spain
  }
\begin{document}

\maketitle

\begin{abstract}

There is an increasing interest in analyzing the behavior of machine learning systems against adversarial attacks. However, most of the research in adversarial machine learning has focused on studying weaknesses against evasion or poisoning attacks to predictive models in classical setups, with the susceptibility of Bayesian predictive models to attacks remaining underexplored. This paper introduces a general methodology for designing optimal evasion attacks against such models. We investigate two adversarial objectives: perturbing specific point predictions and altering the entire posterior predictive distribution. For both scenarios, we propose novel gradient-based attacks and study their implementation and properties in various computational setups. %that are applicable even when the posterior predictive distribution lacks a closed-form solution and is accessible only through Markov Chain Monte Carlo sampling.
\end{abstract}

%%%%%%%%%%%%%%%%%%%%%%%%%%%%%%%%%%%%
\section{INTRODUCTION}

The  ever increasing importance of machine learning (ML) can be appreciated 
 in key relevant societal problems including 
%through, among others, its application to speed up processes that previously took years of expensive research, as in
  drug discovery \citep{garcia2025ai} or %; %support the introduction of radically new technologies like automated driving systems \citep{caballero2021decision}; 
%or aggregate masses of data and expert judgment, as in 
predictive medical systems \citep{corrales2024colorectal}, to name but a few. The impact of large language models and applications based on them has further amplified the transformative potential of ML. Yet, 
  in addition to its positive developments, important misuses have been reported. Many of them arise from attempts by adversaries to outsmart ML algorithms to attain a benefit, leading to the emergence of the relatively recent field of {\em adversarial machine learning} (AML) \citep{joseph}. Most ML models rely on the assumption of independent and identically distributed (iid) data during training and operations. However, in potentially hostile data domains, it is important to be aware that adversaries may modify model inputs
to alter the incumbent distributions. AML aims to provide algorithms that are more robust against adversarial manipulations and focuses on three 
 key issues \citep{riosInsua2023}: studying attacks on algorithms to understand their vulnerabilities; designing defenses to better protect the algorithms against attacks; and providing pipelines that promote best defenses against potential attacks.
%, possibly in dynamic environments.

AML broadly classifies attacks into two categories: \textit{evasion}, 
occurring during model operations, where adversaries manipulate inputs to cause incorrect predictions without altering the training data; and \textit{poisoning}, where attackers corrupt training data to degrade model performance. Both regression and classification algorithms have been shown vulnerable to these types of adversarial examples \citep{goodfellow2014explaining}, but research has largely focused on classification tasks \citep{cina2023wild,gallego2024protecting},
 most likely because the lack of clear decision boundaries in regression complicates defining and assessing adversarial attacks 
  \citep{gupta2021adv}. Additionally, most research has centered on frequentist models, with vulnerabilities of Bayesian ML approaches remaining much less explored and still under foundational debates 
 \citep{feng2024attacking}.

This paper proposes a novel general framework for designing optimal evasion attacks against Bayesian predictive models in both classification and regression problems. Unlike most previous research, our framework applies not only to attacks targeting predictive means but also to those manipulating predictive uncertainty. Our contributions are therefore:

\begin{itemize}
    \item A novel class of adversarial attacks applicable to general Bayesian predictive models, covering both regression and classification, capable of manipulating any quantity expressed as an expectation under the posterior predictive distribution, including predictive means, expected utilities, and entropies.
    \item A complementary family of attacks that reshapes the full posterior predictive distribution (PPD), allowing adversaries to steer both point predictions and model uncertainty.
    \item A concrete application of these methods to target predictive uncertainty, addressing a largely unexplored area in AML, with notably strong empirical effectiveness.
    \item General schemes for attack generation, requiring only sampling access to the PPD, making them applicable to predictive models where posterior inference is intractable and approximated via Markov Chain Monte Carlo (MCMC) or variational inference (VI).
\end{itemize}

\section{RELATED WORK}

 AML has gained increasing 
attention in recent years \citep{joseph,vorobeichikantar,miller2023adversarial,cina2023wild,riosInsua2023}, as adversaries can manipulate data inputs to achieve malicious goals in high-stakes settings. While early work mainly focused on classification \citep{goodfellow2014explaining}, adversarial vulnerabilities have been demonstrated in a range of learning tasks.

% Despite the critical role of predictive uncertainty, particularly in domains like healthcare or finance, adversarial robustness of \emph{Bayesian} predictive models remains largely underexplored. Although some studies suggest that Bayesian methods may offer inherent robustness \citep{de2021adversarial,yuan2020gradient}, recent results show that this is not actually guaranteed \citep{feng2024attacking}. Moreover, most existing approaches examine Bayesian robustness against classical attacks, such as projected gradient descent or fast gradient sign methods, rather than developing new techniques specifically targeting Bayesian models. In fact, attacks that focus on reshaping uncertainty rather than just shifting point estimates are almost entirely absent. To address this gap, we introduce a comprehensive framework capable of manipulating not only posterior predictive means but also uncertainties.

Despite the critical role of predictive uncertainty in high-stakes domains like healthcare and finance, the adversarial robustness of \emph{Bayesian} predictive models remains largely unexplored. While some studies suggest Bayesian methods may inherently resist adversarial attacks \citep{de2021adversarial,yuan2020gradient}, recent findings show that this is not actually guaranteed \citep{feng2024attacking}. Most research evaluates Bayesian robustness against standard attacks that primarily target predictive means, as with projected gradient descent (PGD) or fast gradient sign methods (FGSM). However, attacks specifically tailored to Bayesian models-—particularly those designed to manipulate both predictive means and uncertainty-—are rare. Notable exceptions include methods that require access to ground truth labels during the attack process \citep{galil2021disrupting}, heuristic extensions like PGD+, which alternates PGD steps between attacking predictive means and uncertainty \citep{feng2024attacking} and white-box poisoning attacks that directly target posterior inference rather than predictions \citep{carreau2025poisoning}. Yet, systematic approaches explicitly designed to reshape predictive uncertainty remain scarce. To address this gap, we introduce a general and principled framework for crafting attacks that manipulate both posterior predictive means and uncertainties, providing broad applicability across Bayesian predictive models.

Our framework applies to both classification and regression. While adversarial classification is relatively mature \citep{cina2023wild,gallego2024protecting}, adversarial regression has only recently gained traction, partly because regression lacks natural margins to guide attack and success metrics definition  \citep{gupta2021adv}. Notable contributions include a convex programming framework to generate attacks against regression models with closed-form solutions in numerous cases \citep{balda2018perturbation}; an analysis of several attacks and defenses in the context of autonomous driving \citep{deng2020analysis}, and general methods to create adversarial regression attacks based on the Jacobian of the learned function, tested on industrial datasets \citep{gupta2021adv}. Other efforts focus on poisoning \citep{jagielski2018manipulating,liu2017robust,wen2021great,muller2020data} or outlier attacks \citep{diakonikolas2019sever} in regression contexts. However, none of these proposed attacks are tailored to target Bayesian regression models. Our framework fills this gap and supports sophisticated attacks against general Bayesian predictive models.

%%%%%%%%%%%%%%%%%%%%%%%%%%%%%%%%
\section{PRELIMINARIES}

Consider an agent, referred to as the \textit{predictor} or $P$,
which receives instances with features $ x_i \in \mathbb{R}^p $ and outputs $ y_i $. When $ y_i $ is discrete, we face a classification problem;
 when it is continuous, we deal with a regression problem.
   The data generation process is modeled as follows:
\begin{eqnarray*}
    y_i \vert x_i, \beta, \phi & \overset{\text{iid}}{\sim} & \pi(y_i | f_{\beta} (x_i), \phi)  \\
    \gamma \equiv (\beta, \phi) & \sim & \pi(\gamma)
\end{eqnarray*}
where $\beta$ and $\phi$ are parameters defining the likelihood.
As an example, when $ f_\beta(x_i) = x_i^\top \beta $, with $ \beta = \begin{pmatrix} \beta_0, \beta_1, \dots, \beta_p \end{pmatrix}^\top $, $\phi = \sigma^2$, and $\pi(y_i | f_{\beta} (x_i), \phi)$ is a Gaussian distribution, we recover a linear regression model. 
Similarly, when $f_\beta(x_i) = x_i^\top \beta$ with $\beta = \begin{pmatrix} \beta_0, \beta_1, \dots, \beta_p \end{pmatrix}^\top$, $\phi = 1$, and $\pi(y_i | f_{\beta} (x_i), \phi)$ is a Bernoulli distribution with $p(y_i = 1) = \sigma(f_\beta(x_i))$ where $\sigma(\cdot)$ is the sigmoid function, we recover logistic regression for binary classification.

Given a dataset $ \mathcal{D} = (X_{tr}, y_{tr}) $, where $ X_{tr} \in \mathbb{R}^{n \times p} $ is the design matrix and $ y_{tr}$ is the vector of outputs, the posterior distribution $ \pi(\gamma | \mathcal{D})$ contains all information needed for prediction. With conjugate priors, the posterior distribution has a closed-form solution. However, in general, approximations are typically required to obtain the posterior
using two main types of approaches: optimization-based methods, such as variational inference, and sampling-based methods, such as MCMC, both 
described in \cite{gallego2022current}.
% This paper focuses on sampling-based posterior inference techniques.

From a predictive standpoint, given a new feature vector $x = (x_1, \dots, x_p)$, the PPD 
$\pi (y | x, \mathcal{D} ) = \int \pi(y | f_{\beta} (x ), \phi)
\pi(\gamma | \mathcal{D}) \dd \gamma $
%\mathbb{E}_{\gamma|\mathcal{D}} \left[ \pi(y | x,\gamma) \right]$, with %$\pi(y | x,\gamma) \equiv \pi(y | f_{\beta} (x, z), \phi)$, 
% \begin{equation*}
% \pi (y \,|\, x, \mathcal{D} ) = 
% \int \pi(y \,|\, x,\gamma) \pi(\gamma | \mathcal{D}) \dd \gamma 
% \end{equation*}
%
provides all information about $y$ required to make predictions 
  for decision support, given the data $\mathcal{D}$ and covariates $x$. As with the posteriors, in general, we lack a closed-form expression for this distribution, but we can generate samples from it,
either replacing the posterior by (samples from) the variational approximation or based on 
posterior samples generated via MCMC.

In our setting, an additional agent, referred to as the {\em attacker} or $ A $,  seeks to manipulate instances $ (x, y) $ into $ (x', y') = a(x, y) $
  to fool the predictor in search of an objective.
  We focus on scenarios where the attacker modifies only the covariates $ x $, aligning with real-world concerns where adversaries manipulate inputs to achieve favorable outcomes without directly altering the output.
The attacker's objectives vary depending on the specific application.
 We shall consider two main attacks depending on the attacker's structural objective: those targeting 1) point predictions, or 2) the entire PPD.

In the first case, we assume the attacker's interest 
lies in shifting instance covariates $x$ to $x'$ to influence a particular point prediction, written as the expectation 
$\mathbb{E}_{y \mid x, \mathcal{D}} \left[ g(x, y) \right] = \int g(x,y) \pi(y \mid x, \mathcal{D}) \, \mathrm{d} y $
of a function $g(x, y)$ under the PPD.
This framing handles numerous key quantities in predictive modeling, such as the predictive mean, quantiles, or expected utility. In particular, we  
assume that $A$'s goal is to manipulate the covariates $x$ to shift 
  such expectation towards a desired target $G^*$, framed through the problem  
\begin{equation} \label{eq:attack1}
  \min_{x' \in \mathcal{X}} d \left( \mathbb{E}_{y | x', \mathcal{D}} \left[ g(x', y) \right],  G^* \right),
\end{equation}
where $d$ is a distance function and $\mathcal{X}$ is the set of feasible attacks reflecting $A$'s need to avoid detection.  Typically, $\mathcal{X}$ is defined as a norm-bounded constraint of the form $\| x - x' \| \leq \epsilon$, where $x$ represents the untainted data and $\| \cdot \|$ can be any chosen norm.

In the second case, we assume that the attacker's goal is to transform instance covariates $x$ to $x'$ to steer the PPD $\pi (y | x, \mathcal{D} )$ 
 towards a certain target $\pi_A(y)$, referred to as the adversarial PPD (APPD). Again, this 
objective will be framed through the minimization 
  of some distance between the APPD and the actual PPD. %Thus, the attacker's problem is formalized as
\section{METHODOLOGY}

We propose a class of general evasion attacks targeting Bayesian predictive models with the two previous goals. Both attack types are 
  initially developed in the white-box setting, assuming the attacker has full knowledge of the predictive model, its prior, and its posterior. Extensions to the gray-box setting are discussed in Section 5.5 and the Supplementary Materials (SM) Sections E and F.3. % (SM) Section \ref{app:gray_theory}.

% Differing from most earlier work, we focus on the Bayesian case and aim to minimize assumptions to apportion all uncertainties about the attacker. 
% \textcolor{red}{ We consider two main attacks depending on the attacker's structural objective: those targeting 1) point predictions, or 2) the entire PPD.} 

%{\color{red} Furthermore, within each type of attack, we consider two levels of the attacker's knowledge about the model a) white-box (with $A$ fully knowing the regression model), and b) gray-box (with $A$ only partially knowing it) attacks.} %, and 3) black-box attacks.}

% As is common practice in AML settings, we assume that $A$ performs attacks during operations, known as {\em evasion attacks}.
%
%Additionally, we assume that the attacker can only modify the covariates $ x $ and not the labels $ y $. We focus on scenarios where the covariates are continuous. 
%%%%%%%%%%%%%%%%%%%%%%%%%%%%%
\subsection{Attacks Targeting Point Predictions} \label{sec:point}

%$A$ is supposed to have full knowledge about the regression model, its prior, and posterior. 
%Assume the attacker's interest lies in influencing a particular point prediction written as the expectation of a function $g(x, y)$ under the PPD
%
%\begin{equation*}
 % $  \mathbb{E}_{y \mid x, \mathcal{D}} \left[ g(x, y) \right] = \int g(x,y) \pi(y \mid x, \mathcal{D}) \, \mathrm{d} y $.
%\end{equation*}
%
%This framing allows to handle numerous key quantities in predictive modeling, such as the predictive mcdot ean, variance, or expected utility. In particular, A's goal is to manipulate the covariates $x$ to shift this expectation towards a desired target $G^*$, framed through the problem  
%\begin{equation*}
%  $  \min_{x' \in \mathcal{X}} d \left( \mathbb{E}_{y | x', \mathcal{D}} \left[ g(x', y) \right],  G^* \right),$
%\end{equation*}
%

As mentioned, attacks targeting point predictions will be framed 
  through the optimization problem (\ref{eq:attack1}).  
  Throughout, we use the Euclidean distance as 
   $d(\cdot ,\cdot )$, although other distances could be adopted. 

In simple conjugate settings, e.g. Bayesian linear regression with a normal-inverse-gamma prior, both the posterior and the objective function admit closed-form solutions, allowing for  analytic attack derivations in some cases (see Section \ref{app:normalinversegamma} of SM). In general, however, we lack a closed-form posterior and thus cannot directly compute the optimal attack.  As mentioned, a common practice is to approximate the PPD via sampling.
We can leverage such samples and use stochastic gradient descent (SGD) algorithms for optimization \citep{powell2019unified}. For this, it is necessary to compute unbiased estimates of the gradient of the objective function with respect to the attacked covariates $x'$. 
% To do so, let us rewrite 
%
% \begin{align*}
%     \mathbb{E}_{y \mid x', \mathcal{D}} [g(x', y)] = \mathbb{E}_{\gamma \mid \mathcal{D}}  \mathbb{E}_{\epsilon} [g(x', f_\beta(x') + \sigma \epsilon)] ,
% \end{align*}
%
% where $\epsilon$ follows a standard normal. 

Formally, when the data is perturbed to $x'$, 
let the posterior predictive expectation of $g(x',y)$
 be 
\begin{align*}
    \mu(x') \;\equiv\; \mathbb{E}_{y \mid x', \mathcal{D}} \bigl[\,g(x', y)\bigr] 
    \;=\;
    \mathbb{E}_{\gamma \mid \mathcal{D}} 
    \,\mathbb{E}_{y \mid x', \gamma}\bigl[g(x', y)\bigr].
\end{align*}
  Solving problem (\ref{eq:attack1}) with the Euclidean distance %between $\mu(x')$ and a target $G^*$ 
  is equivalent to %to solving
\begin{equation}\label{eq:general_adv_problem}
    \min_{x' \in \mathcal{X}} 
    J(x') 
    \;\equiv\; 
    \bigl\Vert \mu(x') - G^* \bigr\Vert_2^2.
\end{equation}

\begin{prop}\label{prop:double_sample_gradient}

    Assume (i) $g(x',y)\,\pi(y\mid x',\gamma)$ is measurable and integrable for each $x'$, (ii) $x'\mapsto g(x',y)\,\pi(y\mid x',\gamma)$ is differentiable for almost every $(y,\gamma)$, and (iii) there exists an integrable function $H(y,\gamma)$ with $\|\nabla_{x'}[\,g(x',y)\,\pi(y\mid x',\gamma)\,]\|\le H(y,\gamma)$ for all $x'$. Then the gradient of the objective function in \eqref{eq:general_adv_problem} can be expressed as %the product of two expectations:
    \begin{align}\label{producto}
        \nabla_{x'} J(x')
        \;=\;
        2\,\bigl(\mu(x') - G^*\bigr)^\top 
        \,\nabla_{x'} \mu(x'),
    \end{align}
    where $\nabla_{x'} \mu(x')$ can be written as
    \begin{equation}\label{eq:point_att}
        \mathbb{E}_{\gamma \mid \mathcal{D}} 
        \,\mathbb{E}_{y \mid x', \gamma} 
        \Bigl[
          \nabla_{x'} g(x', y) 
          \;+\; 
          g(x', y)\,\nabla_{x'} \log \pi(y \mid x', \gamma)
        \Bigr].
    \end{equation}
\end{prop}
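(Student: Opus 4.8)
The plan is to establish the two claimed identities in sequence: first the outer chain rule giving \eqref{producto}, and then the inner differentiation-under-the-integral computation giving \eqref{eq:point_att}. The first is purely algebraic. Writing $J(x') = (\mu(x') - G^*)^\top(\mu(x') - G^*)$ and applying the chain rule for the squared Euclidean norm immediately yields $\nabla_{x'} J(x') = 2(\mu(x') - G^*)^\top \nabla_{x'}\mu(x')$, where $\nabla_{x'}\mu(x')$ denotes the Jacobian of the vector-valued map $\mu$. No regularity beyond differentiability of $\mu$ is needed here, so this step is routine; the content of assumptions (i)--(iii) serves entirely to justify that $\mu$ is differentiable and to produce the closed form of its gradient.

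The substance lies in computing $\nabla_{x'}\mu(x')$. I would start from the nested representation $\mu(x') = \mathbb{E}_{\gamma\mid\mathcal{D}}\int g(x',y)\,\pi(y\mid x',\gamma)\,\mathrm{d}y$ and observe that the outer posterior $\pi(\gamma\mid\mathcal{D})$ carries no dependence on $x'$; the entire dependence is confined to the inner integral. The strategy is therefore to differentiate the inner integral and then pull the gradient through the outer expectation over $\gamma$. The crucial step is to move the gradient inside the inner integral, i.e.\ to assert $\nabla_{x'}\int g(x',y)\pi(y\mid x',\gamma)\,\mathrm{d}y = \int \nabla_{x'}[g(x',y)\pi(y\mid x',\gamma)]\,\mathrm{d}y$. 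This is exactly where assumptions (i)--(iii) are used: (i) guarantees the integrand is integrable so that $\mu$ is well defined, (ii) provides pointwise differentiability of the integrand for almost every $(y,\gamma)$, and (iii) supplies an integrable dominating function $H(y,\gamma)$ for the gradient. Together these are the standard hypotheses of the Leibniz rule for differentiation under the integral sign, which I would justify by applying the dominated convergence theorem to the difference quotients. This interchange is the main obstacle, and it is precisely what the three measurability, integrability, and domination assumptions are designed to license.

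Once the gradient is inside, I would apply the product rule to the integrand,
\[
\nabla_{x'}[g(x',y)\pi(y\mid x',\gamma)] = \pi(y\mid x',\gamma)\,\nabla_{x'}g(x',y) + g(x',y)\,\nabla_{x'}\pi(y\mid x',\gamma),
\]
and then rewrite the second term using the score-function (log-derivative) identity $\nabla_{x'}\pi(y\mid x',\gamma) = \pi(y\mid x',\gamma)\,\nabla_{x'}\log\pi(y\mid x',\gamma)$, valid wherever the density is positive. Factoring out the common $\pi(y\mid x',\gamma)$ recasts the integral as the inner conditional expectation $\mathbb{E}_{y\mid x',\gamma}[\nabla_{x'}g(x',y) + g(x',y)\nabla_{x'}\log\pi(y\mid x',\gamma)]$.

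Finally, since the outer expectation over $\gamma\mid\mathcal{D}$ is taken against a measure independent of $x'$, the same dominated-convergence argument (again using the envelope $H$, now integrated also against $\pi(\gamma\mid\mathcal{D})$) permits exchanging $\nabla_{x'}$ with $\mathbb{E}_{\gamma\mid\mathcal{D}}$. Combining the two interchanges gives $\nabla_{x'}\mu(x') = \mathbb{E}_{\gamma\mid\mathcal{D}}\,\mathbb{E}_{y\mid x',\gamma}[\nabla_{x'}g(x',y) + g(x',y)\nabla_{x'}\log\pi(y\mid x',\gamma)]$, which is \eqref{eq:point_att}, and substituting this into the chain-rule expression closes the argument. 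The score-function rewriting is what makes the result usable in practice, since it expresses the gradient as an expectation under the PPD that can be estimated by the same samples used to approximate $\mu(x')$.
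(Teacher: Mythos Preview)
Your proposal is correct and follows exactly the same approach as the paper's own proof: chain rule for the squared norm, then dominated convergence to pass the gradient through the expectations, then the log-derivative (score-function) trick to obtain \eqref{eq:point_att}. The paper's proof is simply a terser version of what you wrote.
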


\begin{proof}
See SM \ref{app:proof1}.
\end{proof}

\begin{algorithm}[ht]
\DontPrintSemicolon
\SetAlgoLined
\KwIn{
Target value $ G^* $; 
Initial covariate vector $ x \in \mathcal{X} $;
Learning rate $ \eta $;
Feasible set $ \mathcal{X} $;
Maximum iterations $ T $.
}
\KwOut{Optimized covariate vector $ x' $}
\BlankLine
Initialize $ x' \leftarrow x $\;
\For{$ t = 1 $ to $ T $}{
    % \tcp{Step 1: Estimate $ \mu(x') $}
    Sample $ \{ (\gamma^{(i)}, y^{(i)}) \}_{i=1}^N $, where $ \gamma^{(i)} \sim \pi(\gamma \mid \mathcal{D}) $ and $ y^{(i)} \sim \pi(y | x', \gamma^{(i)}) $\;
    Compute $ \widehat{\mu}(x') = \dfrac{1}{N} \sum\limits_{i=1}^N g\left(x', y^{(i)} \right) $\;
   % \BlankLine
    % \tcp{Step 2: Estimate $ \nabla_{x'} \mu(x') $}
    Sample $ \{ (\tilde{\gamma}^{(i)}, y^{(i)}) \}_{i=1}^M $\;
    Compute $ \widehat{\nabla}_{x'} \mu(x') =
        \dfrac{1}{M} \sum\limits_{i=1}^M \Big[ \nabla_{x'} g(x', y^{(i)}) +
          g(x', y^{(i)})\,\nabla_{x'} \log \pi(y^{(i)} \mid x', \gamma) \Big] $\;
   % \BlankLine
    % \tcp{Step 3: Compute gradient of objective and update $ x' $}
    Compute $ \widehat{\nabla}_{x'} J(x') = 2 \left( \widehat{\mu}(x') - G^* \right)^\top \widehat{\nabla}_{x'}  \mu(x') $\;
    Update $ x' \leftarrow x' - \eta \nabla_{x'} J(x') $\;
    Project $ x' \leftarrow \text{Proj}_{\mathcal{X}}(x') $\;
}
\Return $ x' $\;
\caption{White-box point attack}
\label{alg1}
\end{algorithm}

Importantly, the proposition shows that the gradient $\nabla_{x'} J(x')$ is the \emph{product of two expectations} (\ref{producto}). 
Therefore, we can compute an unbiased estimate of the gradient by using sample averages for each factor, employing independent samples for each expectation. Using this estimate, problem \eqref{eq:general_adv_problem} can be solved using projected SGD, as Algorithm \ref{alg1} reflects.

Notice that in some cases, such as Bayesian linear regression with Gaussian noise, we can use a reparameterization trick to express the PPD as two nested expectations, independent of $x'$, greatly simplifying gradient estimation and reducing its variance.

%%%%%%%%%%%%%%%%%%%%%
\subsection{Attacks Targeting The Posterior Predictive Distribution}

%\textcolor{red}{Consider first the white-box setting.} 

In this case, we assume that the attacker's goal is 
assessed through the KL divergence between the APPD and the actual PPD,
though other distances may be used. Thus, the attacker's problem is formalized as
\begin{equation} \label{eq:adv_problem_2}
    \min_{x' \in \mathcal{X}} \text{KL} \left( \pi_A(y) \| \pi(y \mid x', \mathcal{D}) \right).
\end{equation}

%However, unlike \cite{nickadv} we do not aggregate this objective with the one referring to detection risk, but rather use the first one as a primary objective to be maximized and the second one to derive a constraint. Several comments seem in order. First, this adapts to the tradition in AML attack design based on determining perturbations within a neighborhood that maximize the change in a function of interest, see e.g.\ \cite{gallego11}. Second, this method generates non-dominated attacks which, under certain conditions, are equivalent to those generated in \cite{nickadv}, given the equivalence of the $\epsilon$-constraint and weighted additive methods in multi-objective optimization \citep{ehrgott}. Third, the proposed approach fits naturally with the outlier detection suggestion in Section 3.4. We use the KL divergence in our formulation because it greatly simplifies the problem numerically in the Gaussian case, but other statistical distances like Wasserstein's \citep{wasserstein} could be used with additional computational complexity.

In general, there will be no analytical expression for the KL divergence in \eqref{eq:adv_problem_2}. %Even in the conjugate case, depending on the specific probability density function for the PPD and the adversarial PPD, the expectation required to compute the objective function might not have a closed-form solution. 
We then resort again to stochastic gradient-based approaches. This 
 requires an unbiased gradient estimate of \eqref{eq:adv_problem_2} with respect to $ x' $. Notice that minimizing the KL divergence is equivalent to solving
\begin{equation} \label{eq:adv_problem_2_1}
    \min_{x' \in \mathcal{X}}  - \mathbb{E}_y \left[ \log \pi(y \mid x', \mathcal{D}) \right],
\end{equation}
where the expectation is taken with respect to the APPD.

\begin{prop}
Assume (i) $y \mapsto \log \pi(y \mid x', D)\,\pi_A(y)$ is integrable for each $x'$, 
(ii) $x' \mapsto \log \pi(y \mid x', D)$ is differentiable for almost every $y$, 
(iii) there exists an integrable $H(y)$ with $\|\nabla_{x'} \log \pi(y \mid x', D)\|\le H(y)$ for all $x'$, 
and (iv) the map $\gamma \mapsto \pi(y \mid x', \gamma)\,\pi(\gamma \mid D)$ is integrable for each $x'$, with $\nabla_{x'} \pi(y \mid x', \gamma)$ dominated by an integrable function. 
Then, the gradient of the objective function in \eqref{eq:adv_problem_2} can be expressed as
\[
-\,\mathbb{E}_y \Biggl[
  \frac{\mathbb{E}_{\gamma \mid D}[\nabla_{x'}\,\pi(y \mid x',\gamma)]}
       {\mathbb{E}_{\gamma \mid D}[\pi(y \mid x', \gamma)]}
\Biggr].
\]
\end{prop}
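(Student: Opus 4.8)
The plan is to differentiate the equivalent objective \eqref{eq:adv_problem_2_1} under the integral sign in two nested stages: once through the outer expectation over the APPD, and once inside the posterior predictive density through the posterior over $\gamma$.

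First I would write the objective explicitly as $L(x') = -\int \log \pi(y \mid x', \mathcal{D})\, \pi_A(y)\, \mathrm{d}y$, which is well-defined and finite by the integrability hypothesis (i). Since $x' \mapsto \log \pi(y \mid x', \mathcal{D})$ is differentiable for almost every $y$ by (ii), and its gradient is dominated uniformly in $x'$ by an integrable $H(y)$ by (iii), the dominated convergence theorem justifies exchanging gradient and integral, giving
\begin{equation*}
    \nabla_{x'} L(x') = -\,\mathbb{E}_y\bigl[\nabla_{x'} \log \pi(y \mid x', \mathcal{D})\bigr].
\end{equation*}

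Next I would apply the chain rule to the inner log-gradient, $\nabla_{x'} \log \pi(y \mid x', \mathcal{D}) = \nabla_{x'}\pi(y \mid x', \mathcal{D}) / \pi(y \mid x', \mathcal{D})$, and invoke the definition of the PPD to recognize the denominator as $\pi(y \mid x', \mathcal{D}) = \mathbb{E}_{\gamma \mid \mathcal{D}}[\pi(y \mid x', \gamma)]$. For the numerator I would use hypothesis (iv), whose integrability and domination conditions again license differentiation under the integral over $\gamma$:
\begin{equation*}
    \nabla_{x'}\pi(y \mid x', \mathcal{D}) = \nabla_{x'}\!\int \pi(y \mid x', \gamma)\, \pi(\gamma \mid \mathcal{D})\, \mathrm{d}\gamma = \mathbb{E}_{\gamma \mid \mathcal{D}}\bigl[\nabla_{x'}\pi(y \mid x', \gamma)\bigr].
\end{equation*}
Substituting both expressions back into the outer expectation yields the claimed formula.

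The hard part will be nothing more than rigorously justifying the two differentiation-under-the-integral exchanges, which is exactly the role played by the domination hypotheses (iii) and (iv): each reduces to a routine application of dominated convergence once a mean-value-theorem bound on the difference quotients is in place. The only additional subtlety I would flag is that the ratio requires the denominator $\pi(y \mid x', \mathcal{D})$ to be strictly positive; this holds $\pi_A$-almost everywhere precisely when the APPD is absolutely continuous with respect to the PPD, which is already implicit in the KL divergence in \eqref{eq:adv_problem_2} being finite.
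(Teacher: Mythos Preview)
Your proposal is correct and follows essentially the same route as the paper's proof: apply the dominated convergence theorem to pass $\nabla_{x'}$ through the outer expectation over $\pi_A$, rewrite $\nabla_{x'}\log\pi(y\mid x',\mathcal D)$ as a ratio via the chain rule, and then apply DCT again to push the gradient inside the posterior expectation over $\gamma$. Your treatment is in fact more explicit than the paper's, and your remark about the positivity of the denominator is a valid and useful observation that the paper leaves implicit.
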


\begin{proof}
See SM \ref{app:proof2}.
\end{proof}

\begin{algorithm}[htb]
\DontPrintSemicolon
\SetAlgoLined
\KwIn{
    Initial covariate vector $ x \in \mathcal{X} $;
    Adversarial PPD $ \pi_A(y) $;
    Data $ \mathcal{D} $;
    Learning rate $ \eta $;
    Feasible set $ \mathcal{X} $;
    Maximum iterations $ T $;
    Number of samples per iteration $ R $;
    Sequence $ \{ M_\ell \} $ (e.g., $ M_\ell = M_0 2^\ell $);
    Weights $ \{ \omega_\ell \} $ (e.g., $ \omega_\ell \propto 2^{-\tau \ell} $).
}
\KwOut{Optimized covariate vector $ x' $}
\BlankLine
Initialize $ x' \leftarrow x $\;
\For{$ t = 1 $ to $ T $}{
    Sample $ y \sim \pi_A(y) $\;
    Independently sample $ R $ levels $ \ell^{(1)}, \dots, \ell^{(R)} $ with probabilities $ \omega_{\ell} $\;
    Compute $ \Delta g_{x', \ell^{(r)}}(y) $ for each sampled level $ \ell^{(r)} $ using the antithetic construction:\;
    \quad If $ \ell^{(r)} = 0 $, compute $ \Delta g_{x', 0}(y) = g_{x', M_0}(y) $\;
    \quad If $ \ell^{(r)} > 0 $, compute $ \Delta g_{x', \ell^{(r)}}(y) = g_{x', M_{\ell^{(r)}}}(y) - \dfrac{1}{2} \left( g^{A}(y) + g^{B}(y) \right) $\;
    \quad %\small{
    \tcp{$ g^{A}(y) $ and $ g^{B}(y) $ are computed using subsets $ A $ and $ B $ of size $ M_{\ell^{(r)} -1} $ from the $ M_{\ell^{(r)}} $ samples}%}
    \;
    Compute the unbiased gradient estimate:\;
    \quad $ \widehat{\nabla}_{x'} J(x') = \dfrac{1}{R} \sum\limits_{r=1}^{R} \dfrac{ \Delta g_{x', \ell^{(r)}}(y) }{ \omega_{\ell^{(r)}} } $\;
    Update $ x' \leftarrow x' - \eta \widehat{\nabla}_{x'} J(x') $\;
    Project $ x' \leftarrow \text{Proj}_{\mathcal{X}}(x') $\;
}
\Return $ x' $\;
\caption{White-box distribution attack}
\label{alg2}
\end{algorithm}

Given the nested expectations in its expression, a standard sampling-based approximation of the gradient, with all expectations estimated 
via sample averages, would be biased. Yet it is possible to construct an unbiased multi-level Monte Carlo (MLMC) gradient estimator, using a debiasing technique adapted from \cite{goda2022unbiased}.
 To wit, for a randomly chosen $y \sim \pi_A(y)$, let us define
\begin{align*}
    g_{x'}(y) \equiv - \frac{\mathbb{E}_{\gamma | \mathcal{D}} \left[ \nabla_{x'} \pi(y | x', \gamma )\right] }{\mathbb{E}_{\gamma | \mathcal{D}} \left[ \pi(y | x', \gamma )\right]}.
\end{align*}
Clearly, $\mathbb{E}_y \left[g_{x'}(y) \right]$
%, with the expectation taken with respect to the sampling distribution of $y$, 
is equal to the gradient of the objective function in \eqref{eq:adv_problem_2_1}. Define also 
\begin{align*}
    g_{x', M}(y) \equiv - \frac{ \frac{1}{M}  \sum_{m=1}^M \nabla_{x'} \pi(y | x', \gamma_m )}{\frac{1}{M}  \sum_{m=1}^M  \pi(y | x', \gamma_m )}, 
\end{align*}
for a sample $\left \{\gamma_m \right\}_{m=1}^M \overset{\text{iid}}{\sim} \pi(\gamma | \mathcal{D})$ and consider a 
 positive increasing sequence $\{ M_\ell \}$, with $M_\ell \rightarrow \infty$ as $\ell \rightarrow \infty$. The strong law of large numbers ensures that $g_{x', M}(y)$ converges almost surely to $g_{x'}(y)$ and, thus, we can write the gradient of the objective function in \eqref{eq:adv_problem_2_1} as
 \begin{equation} \label{eq:mlmc}
     \mathbb{E}\left[g_{x'}(y) \right] = \lim_{\ell \rightarrow \infty } \mathbb{E}\left[g_{x', M_{\ell}}(y)  \right] = 
    \sum_{l=0}^\infty \mathbb{E}[\Delta g_{x', \ell}(y)],
 \end{equation}
where the random variables $\Delta g_{x', \ell}(y)$ are defined such that $\mathbb{E}[\Delta g_{x', 0}(y)] = \mathbb{E}[g_{x', M_0}(y)]$ and $\mathbb{E}[\Delta g_{x', \ell}(y)] = \mathbb{E}[g_{x', M_{\ell}}(y) - g_{x', M_{\ell - 1}}(y)]$ for $\ell > 0$.

The idea is then to create an unbiased estimate of the gradient by sampling indices $\ell^{(1)}, \dots, \ell^{(R)}$ of the sum in \eqref{eq:mlmc} with probabilities $\omega_{\ell^{(1)}}, \dots, \omega_{\ell^{(R)}}$ and computing
\begin{equation*}
    \frac{1}{R} \sum_{r=1}^R \frac{\Delta g_{x', \ell^{(r)}}(y)}{\omega_{\ell^{(r)}}}.
\end{equation*}

In practice, the $\omega_{\ell}$ are chosen to be $\propto 2^{-\tau \ell}$ for a given $\tau$, and the coefficients $\Delta g_{x', \ell}(y)$ are estimated using the antithetic coupling approach in \cite{goda2022unbiased}. In our experiments, we set $\tau$ heuristically to balance cost and variance. Algorithm \ref{alg2} compiles the required computations.  To better assess the practicality of the proposed attacks, we provide an 
 analysis of their computational overhead in Appendix~\ref{app:comp}. This overhead can be substantially reduced through various strategies, including more efficient sampling techniques, low-rank approximations, and adaptive sampling schedules.

% \textcolor{blue}{While computational efficiency is not the primary focus of this work, understanding the feasibility of the proposed attacks is important. We provide an analysis of computational overhead in Appendix~\ref{app:comp}. This overhead can be mitigated through strategies such as more efficient sampling methods, low-rank approximations, or adaptive sampling schedules.}

%%%%%%%%%%%%%%%%%%%%%%
% {\color{red}
% \paragraph{Gray-box attacks.}
% Conceptually, the setting is similar to that in Section 4.1, characterizing the gray-box attack as $(x'(\rho), \pi(\rho))$.
% }
%%%%%%%%%%%%%%%%%%%%%%%%%%%%%
\section{EXPERIMENTS}

This Section assesses the effectiveness of the proposed attacks
through {\em security evaluation plots} (SEP) as in \cite{gallego2024protecting}, 
displaying performance metrics at different attack intensities,
defined through the maximum $\epsilon$ allowed for the $L_2$ norm  of the difference between the original and perturbed covariates,
that is ${\cal X}=\{ x' : ||x-x'||_2  \leq \epsilon \}$. 
 Since, to the best of our knowledge, no existing attacks offer the same level of generality as ours, we construct a tailored benchmark attack. Inspired by FGSM \citep{goodfellow2014explaining}, it estimates the objective’s gradient once (via sampling with our algorithms) and perturbs each covariate by $\pm \epsilon$ along the negative gradient direction.
%Given the lack of attacks targeting Bayesian models, as a benchmark, we employ a self-made baseline attack inspired by the fast gradient sign method \citep[FGSM]{goodfellow2014explaining} where the gradient of the objective is estimated just once (via sampling using the proposed algorithms) and each covariate is perturbed by $\pm \epsilon$ in the negative direction of its corresponding gradient element.
To account for attack stochastic variation, each experiment is repeated several times, and results are reported as mean plus/minus two standard errors. Code to reproduce all experiments as well as full hyperparameter specifications 
   are available at \url{https://github.com/pablogarciarce/AdvReg}. Computations were performed on an 88 \textit{Intel(R) Xeon(R) E5-2699 v4} CPUs server with 250 GB RAM.

The first experiment provides structural insights with a simple case 
in Bayesian linear regression with synthetic data. We then consider
three examples with real data, still within the linear regression 
realm, as well as a case with a Bayesian neural network (BNN) model. Finally, we explore attacks in classification, specifically targeting predictive uncertainty.

%%%%%%%%%%%%%%%%%%%%%%%%%%%%
\subsection{ATTACKS ON BAYESIAN REGRESSION WITH SYNTHETIC DATA}

To build intuition about attack performance and behavior,
we begin with a simple simulation study using a linear regression model with two covariates. The dataset consists of 1,000 observations with iid $\mathcal{N}(0, 1)$ covariates. The response variable is generated 
as $ y \sim \mathcal{N}(\beta_0 x_0 + \beta_1 x_1, \sigma^2) $, with $ \beta_0 = -1$, $ \beta_1 = 2$, and  noise variance $ \sigma^2 = 1$. For model evaluation, a 70-30\% train-test split is applied. 
Conjugate normal priors $ \beta_0, \beta_1 \sim \mathcal{N}(0, 1) $
 are used. Under this setup, the PPD for any 
$ x $ follows a normal distribution with mean $ x^\top \mu_n $ and variance $ x^\top \Lambda_n^{-1} x + \sigma^2 $, with $ \mu_n $ and $ \Lambda_n $ respectively being the posterior mean and covariance matrix for $ \beta $, SM Section B.
    
We start by evaluating attacks targeting point predictions. Particularly, the attacker's goal is to shift the mean of the PPD towards a specified value, chosen 
as $ \mu^* = 3 $. In this scenario, the objective function has a closed-form solution, enabling the computation of exact gradients. Furthermore, an analytical solution to the attacker's problem is available (SM \ref{app:point_analytical}). To ensure the validity of the proposed strategy, let us solve the problem using SGD with gradients estimated through sampling, as in Algorithm \ref{alg1}. % As an initial sanity check, Figure \ref{subfig:point_gradient_estim_indep}, provides
SM \ref{app:unb_toy} presents an empirical evaluation of the unbiasedness of the gradient estimates produced by the algorithm.

%
% \begin{figure}[ht]
% \begin{multicols}{2}
%     \includegraphics[width=\linewidth]{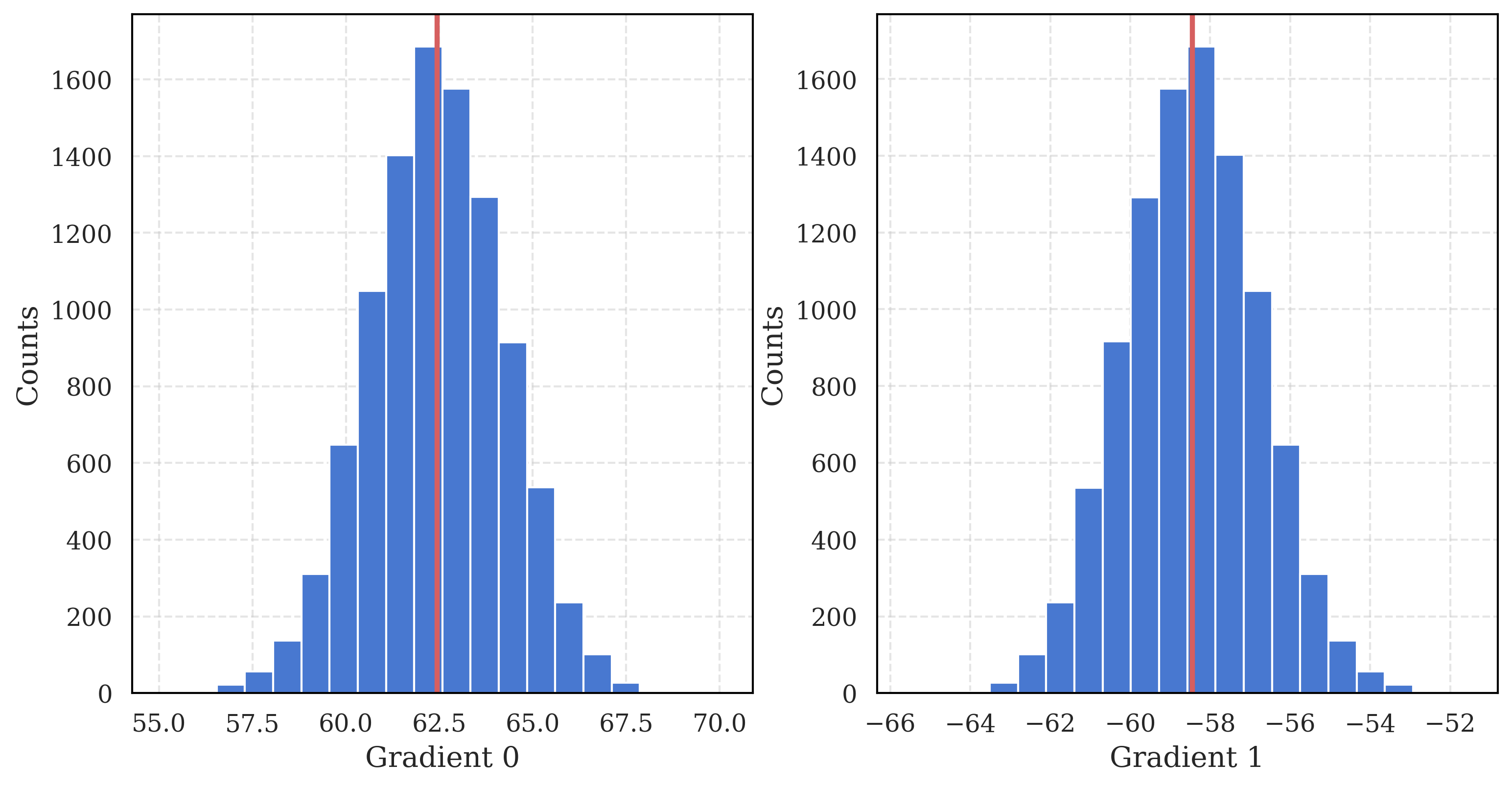}
%     \subcaption{Attacks targeting point predictions.}\label{subfig:point_gradient_estim_indep}\par 
%     \includegraphics[width=\linewidth]{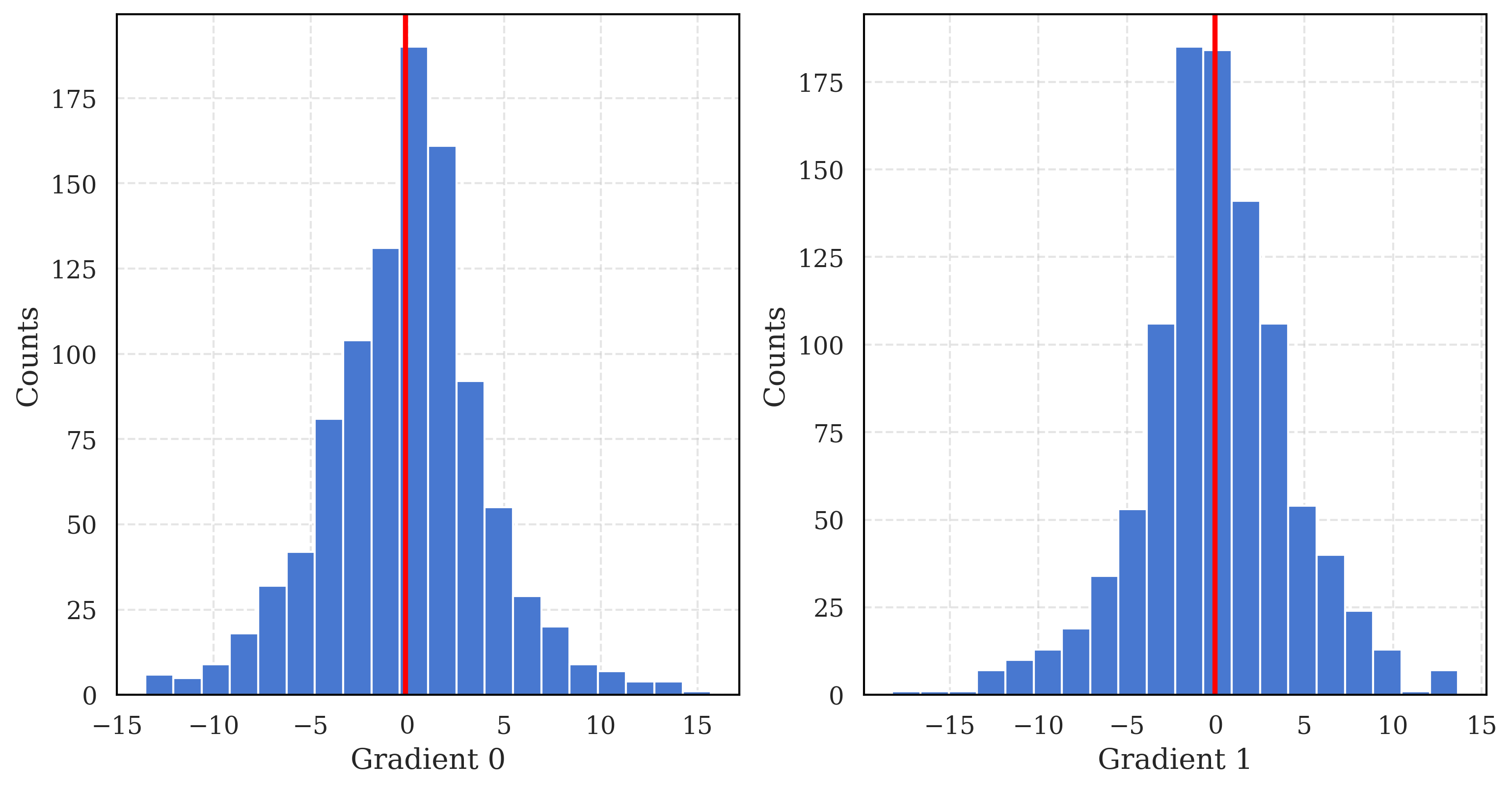}
%     \subcaption{Attacks targeting full PPD.}\label{subfig:distr_gradient_estim_indep}\par 
%     \end{multicols}
% \caption{Estimated gradient distribution and true value for the synthetic dataset.}
% \label{fig:gradient_estim_indep}
% \end{figure}

Through this simple model, we can visualize the impact of the attack on test instances. Figure \ref{subfig:point_field_indep} portrays covariates $ x_1 $ and $ x_2 $ on the $x$- and $y$-axes, respectively. Each attack is represented by an arrow with the tail corresponding to the untainted covariate vector, and the head indicating the perturbed one. Components of $ \mu_n $ are depicted as a blue arrow. All arrows are parallel and 
aligned with the  $ \mu_n $  direction, pointing out in the direction in which $ \mu_n^T x $ increases. This is expected, as the initial PPD  mean is lower than the target $ \mu^* $ for all test data points, aligning with the analytical solution \eqref{mean_att_l2} in the SM.

\begin{figure}[ht]
\begin{multicols}{2}
    \includegraphics[width=\linewidth]{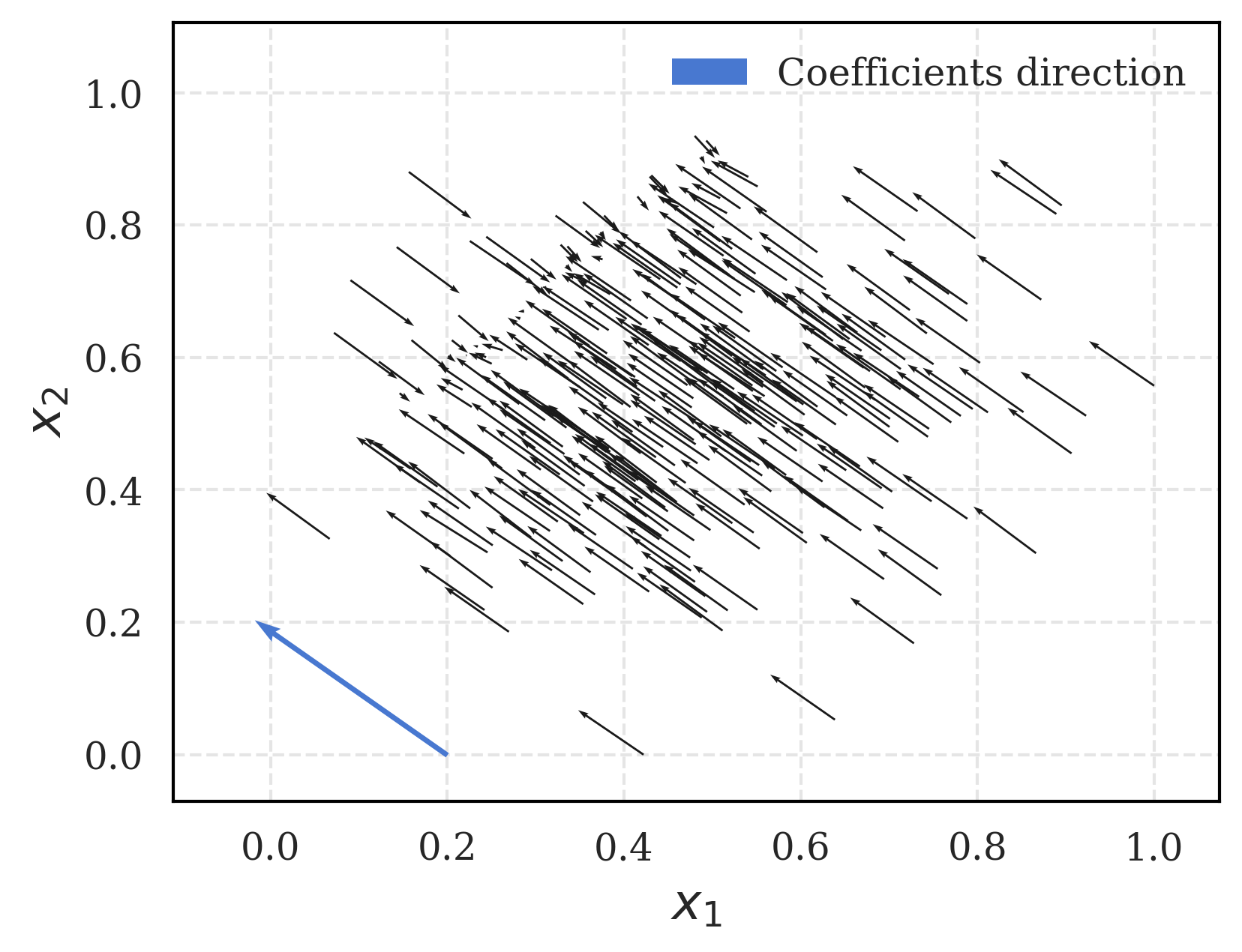}
    \subcaption{Perturbation field.}\label{subfig:point_field_indep}\par 
    \includegraphics[width=\linewidth]{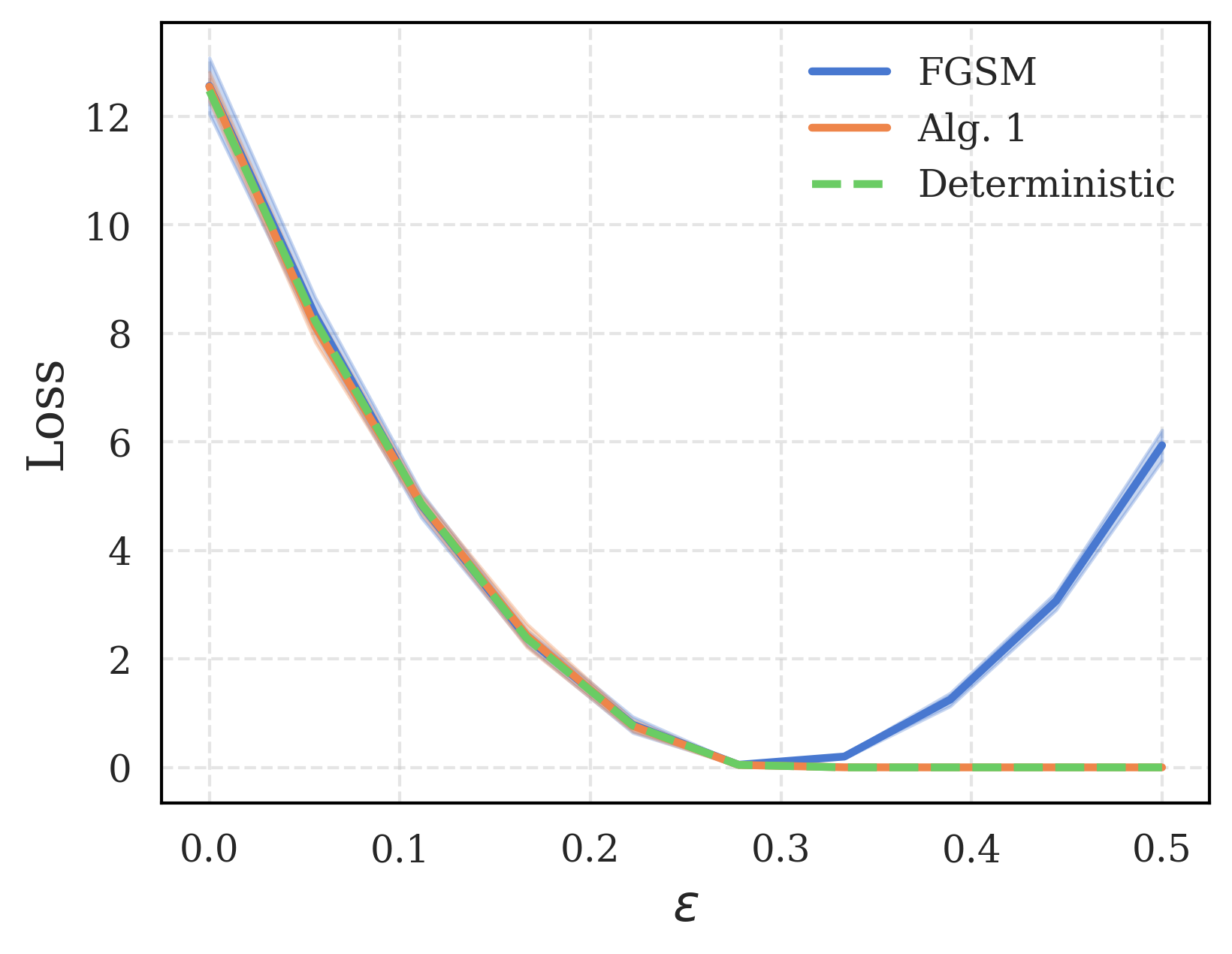}
    \subcaption{SEP.}\label{subfig:point_eps_indep}\par 
    \end{multicols}
\caption{Attacks to the PPD mean.}
\label{fig:point_indep}
\end{figure}

Figure \ref{subfig:point_eps_indep} presents the SEP for three attack strategies: the deterministic approach in SM \ref{app:point_analytical}, Algorithm \ref{alg1}, and the FGSM-inspired attack. The attacker's goal is to shift the predictive mean of an instance, initially $\mu=-0.5$, towards the target value $ \mu^*$.
The plot shows the squared difference between $ \mu^*$ and the PPD mean induced by the attack, for attack intensities varying from 0 to 0.5. 
The method based on Algorithm \ref{alg1} closely matches the deterministic solution across all attack intensities. In this 
 simple setting, FGSM's performance is comparable to the other two approaches for $\epsilon \leq 0.3$. Notably, a perturbation with $ \Vert x - x' \Vert_2 \simeq 0.2 $ is sufficient to increase the PPD mean from $-0.5$ to $1.9$, with the target mean being achieved at $ \Vert x - x' \Vert_2 \simeq 0.3 $.

Let us turn now our attention to attacks targeting the full PPD,
  which, in the linear regression setting considered with know variance, follows a normal distribution. The attacker’s goal is to increase predictive uncertainty at a specific point $x$. This is achieved by selecting a normal APPD with the same mean as the original PPD but with,
    say, four times the variance. Under simulated data, with large-sample size, 
    predictive uncertainty is primarily driven by aleatoric uncertainty (due  to inherent randomness or variability in the system), as epistemic uncertainty (arising from incomplete knowledge about the system studied) diminishes with an increased number of data points
     \citep{banks2022adversarial}. Hence, since aleatoric uncertainty cannot be influenced by manipulating covariates, such attacks are unlikely to be effective in this context. To better evaluate the impact of attacks targeting posterior uncertainty, we generate a synthetic dataset with smaller sample size $n=10$, where epistemic uncertainty has a more pronounced effect. Given that both the original and adversarial PPDs are normal, there exists a closed-form solution for the KL divergence. However, we approximate the attacks using SGD with gradient estimates obtained via sampling as in Algorithm \ref{alg2}. An empirical evaluation of the algorithm's gradient estimates in SM \ref{app:unb_toy} confirms their unbiasedness.

Figure \ref{subfig:distr_field_n10} illustrates how a set of points is altered, with arrows representing changes as in Figure \ref{subfig:point_field_indep}. Most arrows point in directions that move away from the training data, where predictive variance increases.
\begin{figure}[ht]
\begin{multicols}{2}
    \includegraphics[width=\linewidth]{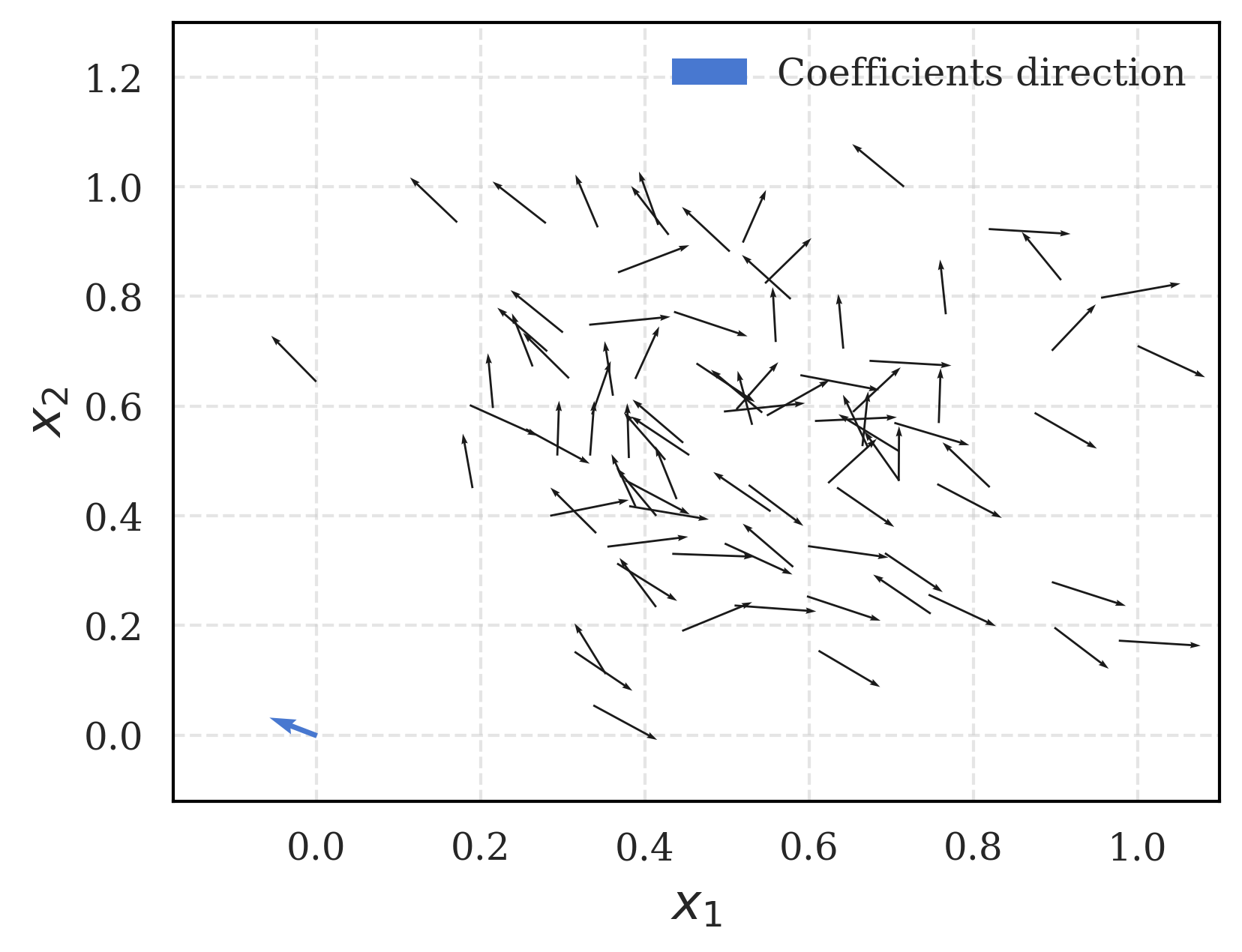}
    \subcaption{Perturbation field.}\label{subfig:distr_field_n10}\par 
    % \includegraphics[width=0.8\linewidth]{img/toy_n10/distr_m2s_eps.png}
    % \subcaption{SEP.}\label{subfig:distr_m2s_eps_n10}\par 
    \includegraphics[width=\linewidth]{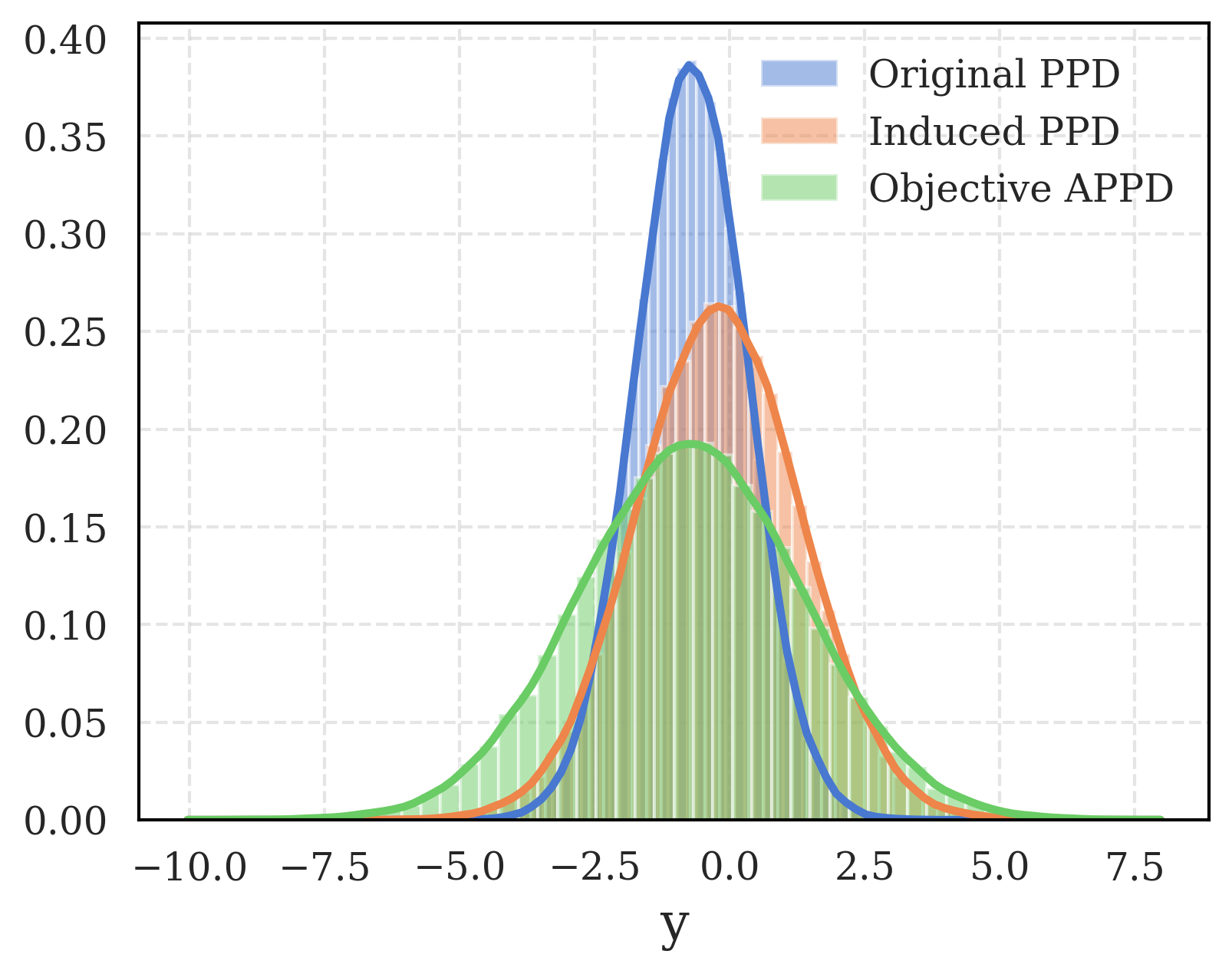}
    \subcaption{Original, adversarial and induced PPDs.}\label{subfig:distr_m2s_n10}\par 
    \end{multicols}
\caption{Attacks to full PPD.}
\label{fig:perturbations}
\end{figure}
%
%{\color{blue}
% Figure \ref{subfig:distr_m2s_eps_n10} show the KL divergence between the APPD and the induced PPD for an initial point $x=(0.43, 0.33)$ for various attack intensities. FGSM attack is omitted due to its lack of success. 
Figure \ref{subfig:distr_m2s_n10} shows the PPD, APPD, and induced PPD for an $\epsilon = 2$ attack targeting an initial    $x=(0.43, 0.33)$. Interestingly, even an attack with $\epsilon = 1$ was not capable of fully shifting the PPD towards the APPD. This is because the selected point $x$ lies within the training data distribution, where predictive variance is low due to reduced epistemic uncertainty. Doubling the variance would require the attacker to significantly move the covariates beyond the training data distribution, where epistemic uncertainty is higher.

% \textcolor{blue}{As we see in Figure \ref{subfig:distr_eps_m2s_indep}, the attack to drive the full PPD towards $N(\mu_D, 2\sigma^2_D)$ is not successful. This is a consequence of the setting employed, as the posterior predictive variance is only modified if $\Lambda^{-1}$ is big enough. Thus, a new dataset with correlated covariates has been created. For this new synthetic dataset, an attack example and the security evaluation plot are shown in Figure \ref{fig:toy_dataset}}

%%%%%%%%%%%%%%%%%%%%%%%%%%%%%%%%%%%%%%%%%%
\subsection{ATTACKS ON BAYESIAN REGRESSION WITH REAL DATA}

This Section evaluates the effectiveness of attacks on three real-world regression datasets. In all cases a linear regression model with normal-inverse gamma prior is used. 

The \textbf{Wine Quality} dataset \citep{wine_data} contains 4,898 white wine samples described by 11 physicochemical features; the response variable is wine quality, scored on an ordinal scale from 3 to 8. The \textbf{Energy Efficiency} dataset \citep{athanasios_tsanas_energy_2012} 
comprises 768 samples with 8 input features related to building characteristics; the selected response is heating load. Finally, the \textbf{California Housing} dataset  \citep{cal_housing_data} 
includes 20,640 samples, with 8 input features, and the response 
representing the median house value for California districts.
In all cases, the attacker's goal for attacks targeting point predictions is to drive the mean of the PPD towards twice the mean value of the response variable over the training set; for full PPD
 attacks, the goal is to drive the PPD towards a normal distribution with twice the mean and four times the variance of the untainted PPD.

Table \ref{table:datasets_point_RMS} presents the root mean squared error (RMSE) between the induced posterior predictive mean and its adversarial target on test instances for each dataset, under attack intensities 0.0 (no attack), 0.2, and 0.5. Higher attack intensities consistently move the induced mean closer to the adversarial target.
%across all datasets. 
In particular, for the California Housing dataset, even a low-intensity attack ($\epsilon = 0.2$) is highly effective, reducing the average deviation from 2.27 to 0.36. This highlights that even small manipulations of the covariates can significantly shift point predictions, underscoring the model’s vulnerability to targeted attacks. Additionally, Table~\ref{table:datasets_point_RMSE_R} in SM \ref{app:three} illustrates  how high-intensity attacks increase the RMSE on test set predictions.
%is provided i.

\begin{table}[htbp]
\centering
%\resizebox{0.45\textwidth}{!}{
\begin{tabular}{lccc}
\hline
\textbf{Dataset} & \textbf{0.0} & \textbf{0.2} & \textbf{0.5} \\
\hline
\textbf{Wine}    & $5.94 \pm 0.13$   & $4.07 \pm 0.25$  & $1.39 \pm 0.43$ \\
\textbf{Energy}  & $5.96 \pm 0.59$   & $4.94 \pm 0.58$  & $3.52 \pm 0.53$ \\
\textbf{Housing} & $2.27 \pm 0.13$   & $0.36 \pm 0.09$  & $0.025 \pm 0.003$ \\
\hline
\end{tabular}
%}
\caption{RMSE for point attacks.}
\label{table:datasets_point_RMS}
\end{table}

For attacks targeting the full PPD, Table \ref{table:datasets_ppd} reports mean and two standard deviation intervals of the KL between the APPD and the PPD induced by the attack across all test instances, for each dataset and attack intensity. As before, higher attack intensities result in PPDs that more closely resemble the target, further demonstrating the attack effectiveness in manipulating predictive uncertainty.

\begin{table}[htb]
\centering
%\resizebox{0.45\textwidth}{!}{
\begin{tabular}{lccc}
\hline
\textbf{Dataset} & \textbf{0.0} & \textbf{0.2} & \textbf{0.5} \\
\hline
\textbf{Wine}    & $20.02 \pm 1.33$   & $11.31 \pm 0.83$  & $8.41 \pm 1.15$ \\
\textbf{Energy}  & $31.51 \pm 5.78$   & $24.44 \pm 4.82$  & $15.84 \pm 3.46$ \\
\textbf{Housing} & $4.00 \pm 0.61$    & $0.82 \pm 0.24$   & $0.89 \pm 0.25$ \\
\hline
\end{tabular}
%}
\caption{KL divergence for full PPD attacks.}
\label{table:datasets_ppd}
\end{table}

% \begin{itemize}
%     \item \textbf{Wine dataset: }The Wine Quality dataset consists of 4898 white wine samples, each described by 11 physicochemical features such as acidity, sugar content, and pH. The response variable represents wine quality, scored on an ordinal scale from 3 to 8.
    
%     \item \textbf{Energy dataset: }The Energy Efficiency dataset contains 768 samples with 8 input features related to building characteristics (e.g., wall area, glazing area, orientation) and 2 response variables: heating load and cooling load. Heating load has been selected as the response variable.
    
%     \item \textbf{Housing dataset: }The California Housing dataset consists of 20,640 samples with 8 input features, including median income, house age, and proximity to the ocean. The response variable is the median house value for California districts.
    
% \end{itemize}

%%%%%%%%%%%%%%%%%%%%%%%%%%%%
\subsection{REGRESSION CASE STUDY WITH A BAYESIAN NEURAL NETWORK}\label{sec:wine}

This Section demonstrates the framework’s scalability by increasing the complexity of the attacked predictive model. 
We use again the UCI wine dataset.
Imagine a winery controls the \emph{actual} vector of quality indicators $x$ when producing a wine, but may \emph{declare} a slightly altered vector $x'$ to a third--party rating agency.  
The market value of a bottle is taken to be $g(x, y) = \exp\left(\nicefrac{y^2}{100}\right) - c(x)$ where $c(x)$ denotes the production cost.  The exponential term captures how small quality gains translate into disproportionately higher willingness to pay once a wine is perceived as {\em reserve}  rather than mid-range.

As the quality $y$ given indicators $x$ is uncertain, it is modeled with a Bayesian neural network (BNN) with a single three neuron hidden layer.  
Normal priors are assigned to the weights and biases in both the hidden and output layers. The response is assumed to follow $y \sim \mathcal{N}(bnn(x), \sigma^2)$, where $bnn(x)$ represents the BNN output for the covariate vector $x$, with $\sigma^2$ 
  assigned a $\Gamma(2, 2)$ prior. Posterior sampling is conducted using the NUTS algorithm \citep{hoffman2014no}.

If the winery were to report the true indicators $x$, the expected profit would be $\mathbb E_{y\mid x,\mathcal D}[g(x,y)]\approx 1.4$.  
Instead, it reports $x'$, indistinguishable in routine checks,    artificially inflating the agency’s valuation to a target of $G^{\ast}=3$ while respecting
$\lVert x-x'\rVert_{2}\le\epsilon$.  
Formally, the winery solves problem~\eqref{eq:attack1}.  
 Since the objective is an intractable posterior expectation, we employ Algorithm~\ref{alg1}.

% Figure \ref{fig:point_expy2_eps} shows 
We calculated the squared difference between the predictive value of the wine under tainted indicators and the target value, for several intensities $ \epsilon $. The corresponding SEP is   in SM Section \ref{app:wine}. Notably, observe that a strategically chosen perturbation with $ \epsilon = 0.5 $ is sufficient to reach the desired target value using Algorithm \ref{alg1}. 
In contrast, the FGSM  attack requires a higher $\epsilon$ and exhibits more instability in achieving the same objective.

To assess the impact of the norm used to constrain perturbations, we generated attacks under $L_2$ and $L_1$ constraints with $\epsilon=0.3$. The original and perturbed covariates are in Table \ref{tab:perturbations} SM. As expected, the $L_1$-constrained attack alters only a subset of covariates, emphasizing the need to safeguard influential features as a potential defense strategy.

In a final experiment, the attacker's goal is to make the 
agency believe that the mean quality of the chosen wine is higher than it truly is, while also increasing the uncertainty about its quality. 
\begin{figure}[h!]
\centering
    \includegraphics[width=0.49\linewidth]{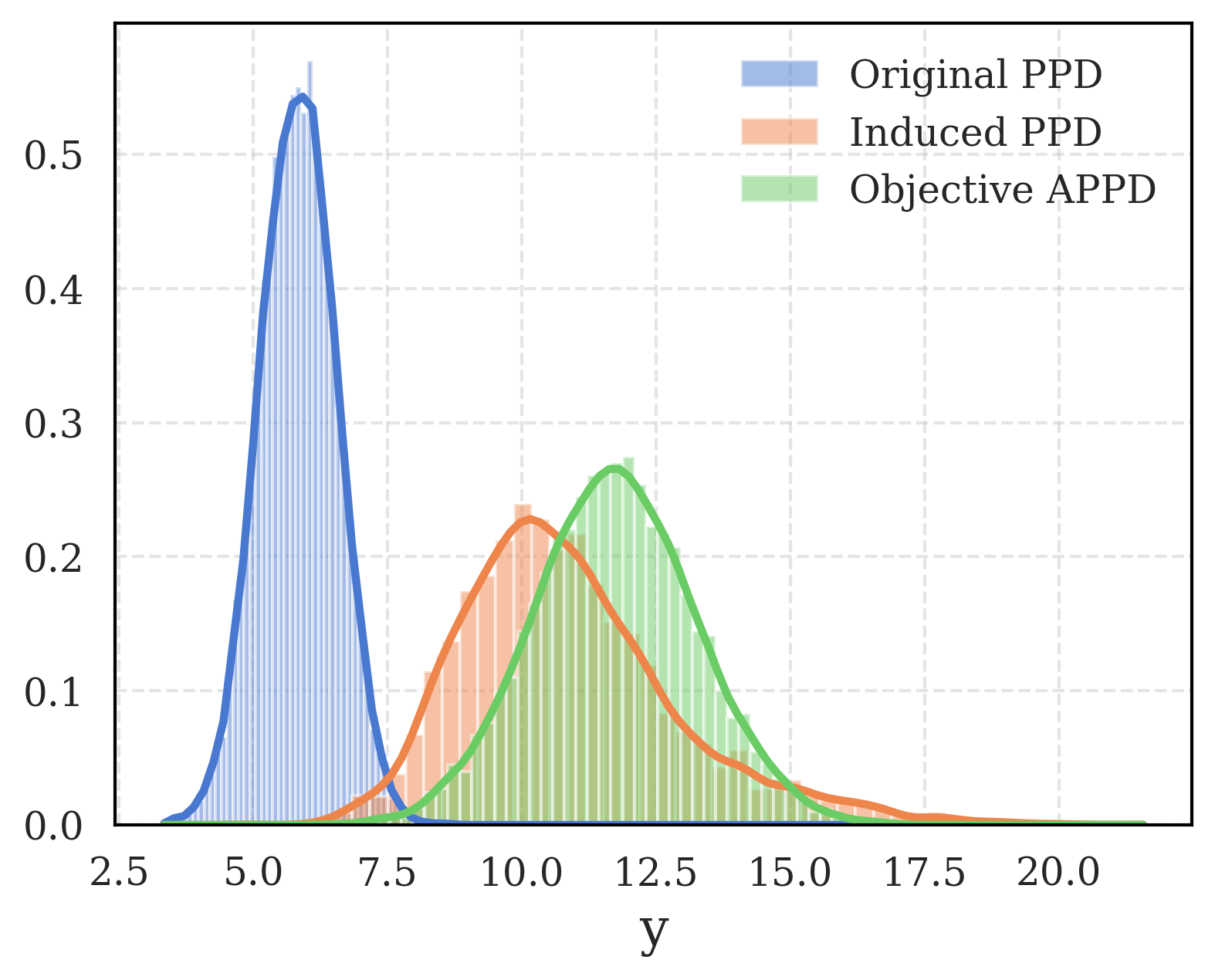}
\caption{Original, adversarial and induced PPDs.}\label{subfig:wine_PPD}
\end{figure}
This is exemplified by setting the APPD to be normally distributed with double the mean and four times the variance of the original PPD for that wine. Figure \ref{subfig:wine_PPD} illustrates the original, target, and induced PPD with $\epsilon=0.7$. This attack causes a significant shift in the posterior predictive mean and a notable increase in variance. Under the original covariates, the probability that the wine’s quality exceeds 7.5 was nearly zero, but now it exceeds 0.9. This could easily lead to wrong marketing decisions, highlighting the potential real-world impact of such adversarial attacks. 
%Moreover, the security evaluation plot in Figure \ref{subfig:wine_PPD_sec} demonstrates that the FGSM attack performs far worse in this setting, further highlighting the effectiveness of the proposed method.

%%%%%%%%%%%%%%%%%%%%%%%%%%%%%
\subsection{ATTACKING UNCERTAINTY IN CLASSIFICATION}

\begin{figure*}[h]
\begin{multicols}{4}
    \includegraphics[width=\linewidth]{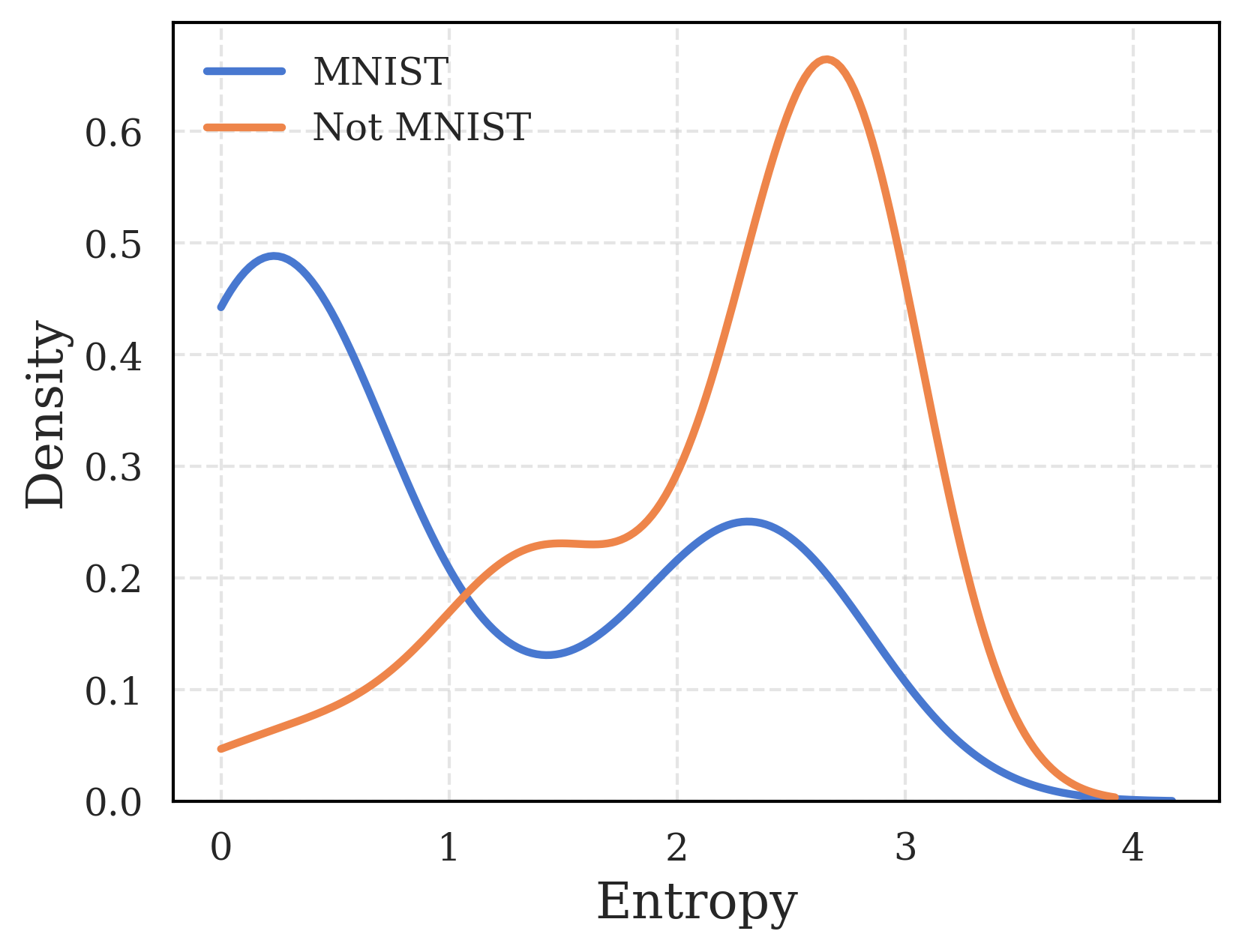}
    \subcaption{PPD entropy over a sample of test dataset for MNIST and notMNIST data.}\label{fig:baseline_entropy}\par 
    \includegraphics[width=\linewidth]{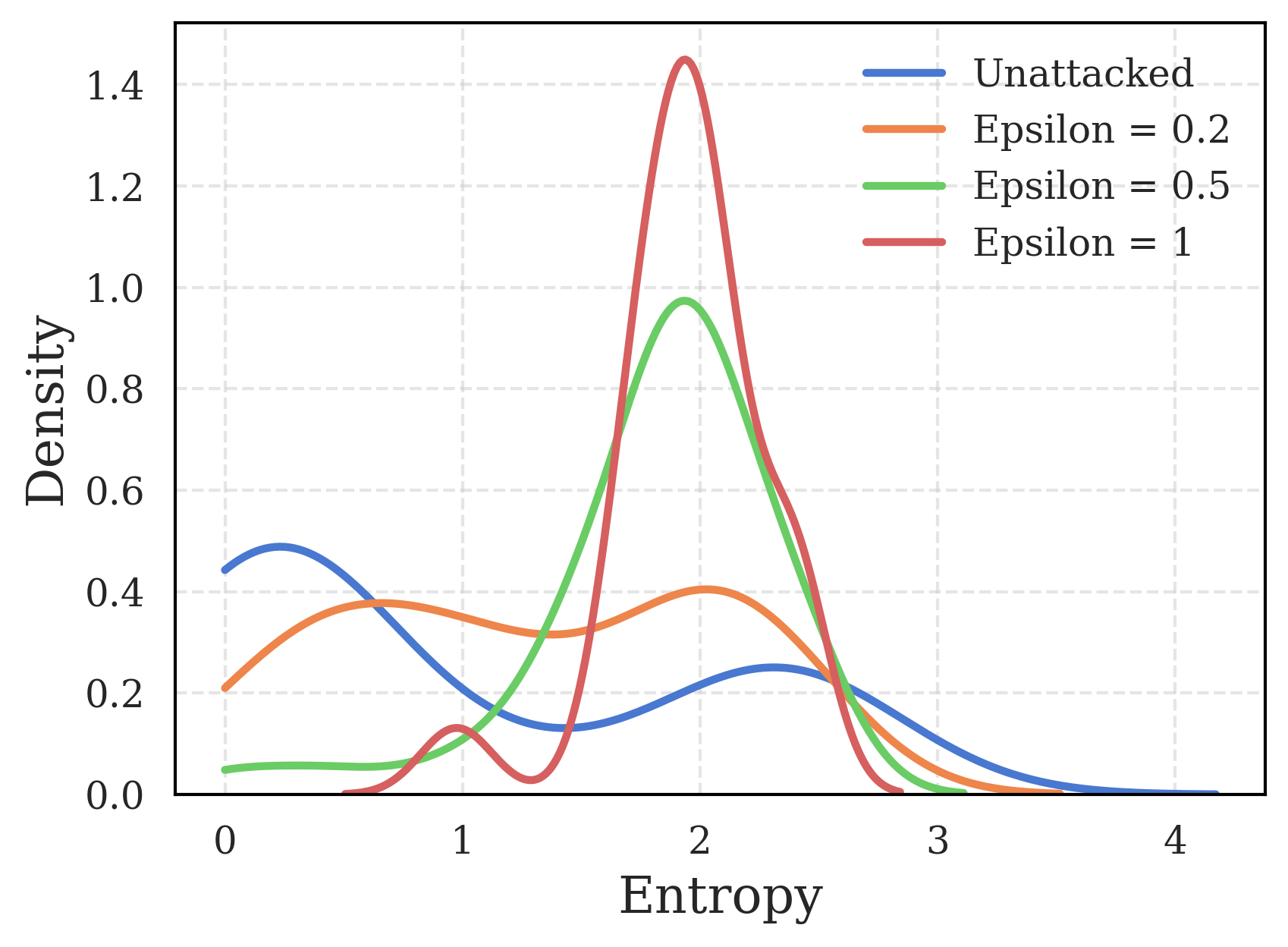}
    \subcaption{Inflating the entropy for MNIST data.}\label{fig:point_rise}\par
    \includegraphics[width=\linewidth]{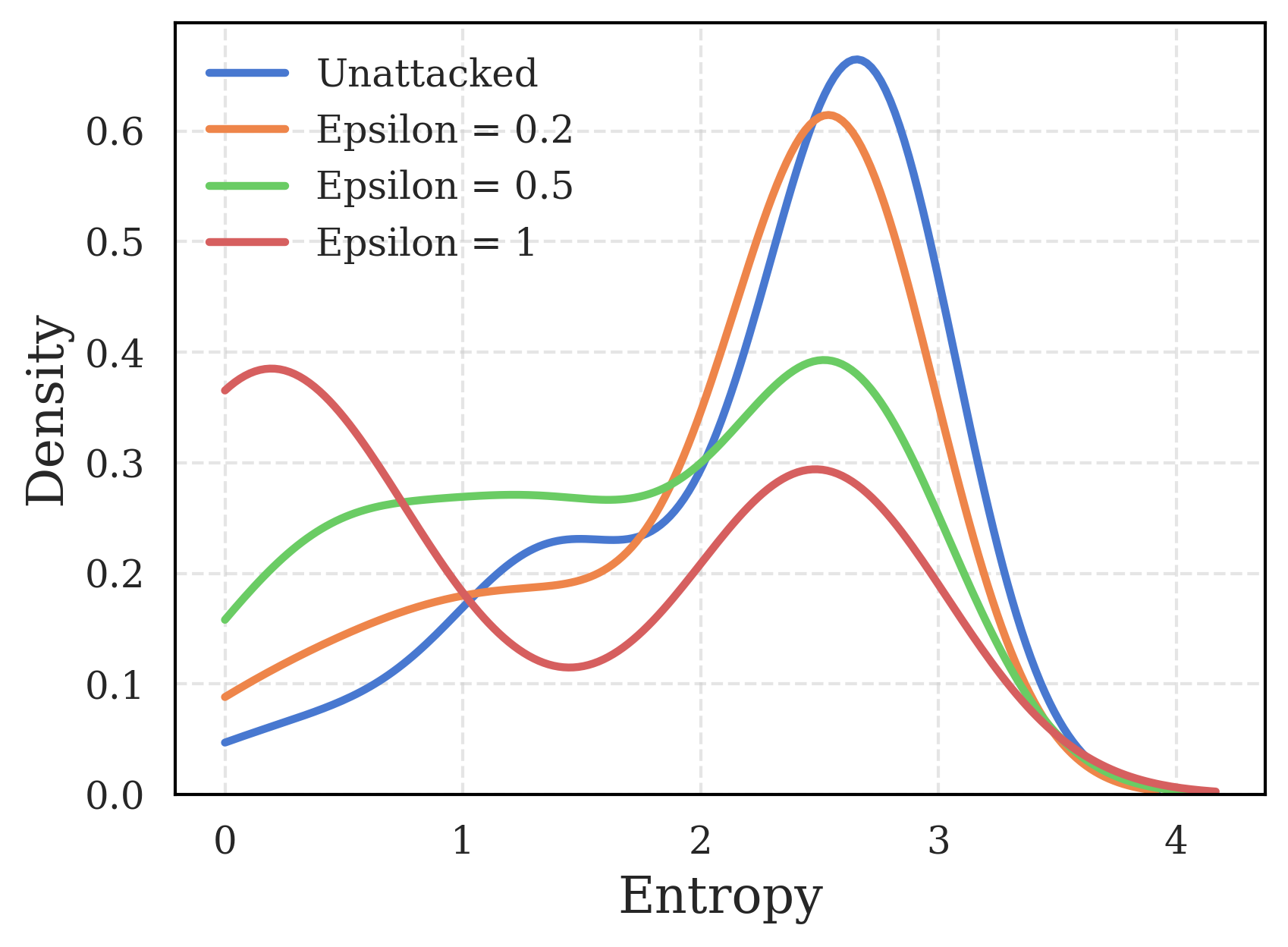}
    \subcaption{Lowering the entropy for notMNIST data.}\label{fig:point_lower}\par
    \includegraphics[width=\linewidth]{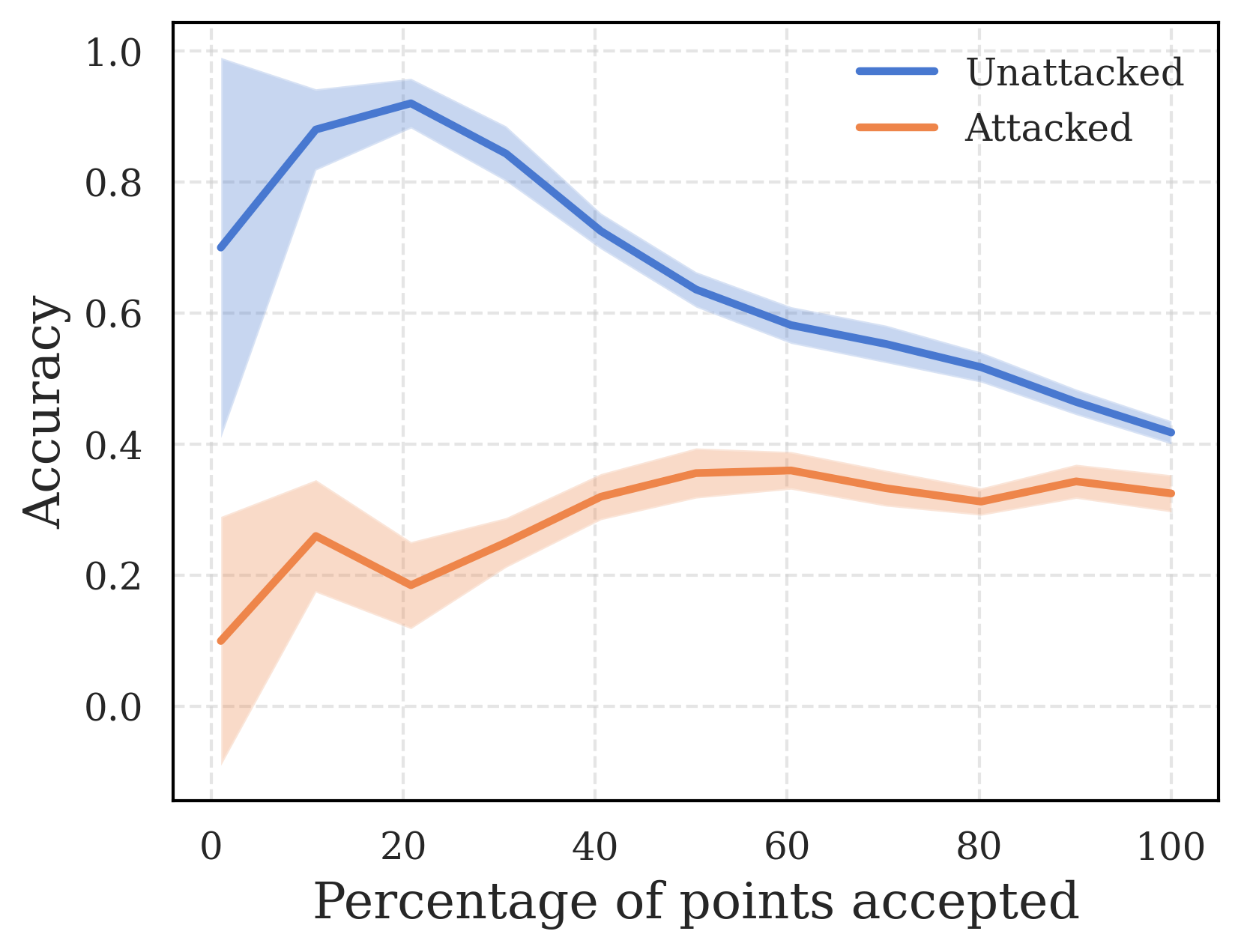}
    \subcaption{Selective prediction accuracy.}\label{fig:acc_reject}
    \end{multicols}
\caption{Attacks to the expected entropy over the PPD.}
\label{fig:point_ent}
\end{figure*}

This Section applies the proposed attacks to classification, targeting posterior uncertainty in a BNN trained on the MNIST handwritten digits dataset \citep{mnist} using variational inference. 
The label $y$ of an image with pixel values $x$ is predicted  using a BNN with two hidden layers, each containing ten neurons. Normal priors on the weights and biases of both layers are 
 employed, and the response follows a categorical distribution with class probabilities based on the BNN's output. A low-rank multivariate normal distribution is used as variational family.  

A key objective of this experiment is to evaluate whether posterior uncertainty, either under clean or attacked data, can help detect out-of-distribution (OOD) from in-distribution (ID) samples. To this end, we test our attacks on both the MNIST test set (ID) and the notMNIST dataset \cite{bulatov2011notmnist} (OOD), which consists of images of letters A to J, structured similarly to MNIST. For each attack, we analyze the distribution of predictive entropy in both datasets, following \cite{lakshminarayanan_simple_nodate}. Under clean data, if uncertainty measures are effective for OOD detection, we expect MNIST test set instances to exhibit low entropy, while notMNIST instances should show higher entropy. This is 
  showcased in Figure \ref{fig:baseline_entropy}, which presents a density plot of predictive entropy for both datasets.

% The experiment that we propose is designed to replicate the one by \cite{lakshminarayanan_simple_nodate}, where in-distribution data points from MNIST exhibit low entropy, while out-of-distribution samples from the not-MNIST dataset are associated with high entropy, as in Figure \ref{fig:baseline_entropy}. This Section employs the same KDE plots as used in \cite{lakshminarayanan_simple_nodate} to better analyze the results (instead of just focusing on the mean value) and facilitate comparison.

The goal of these attacks is to manipulate predictive entropy, effectively disrupting the model's expected behavior. Specifically, we aim at reducing or eliminating the distinction between ID and OOD entropy estimates. To achieve this, 
 we introduce two attack strategies, both constrained within an $L_2$-norm $\epsilon$-ball centered on the clean image. To quantify the impact of our attacks, we analyze the shift in the entropy distribution across varying perturbation intensities $\epsilon$.

First, we conduct point attacks targeting predictive entropy, aiming to increase entropy for MNIST images and decrease it for notMNIST. These attacks naturally align with the framework outlined in Section \ref{sec:point}. For instance, to maximize predictive entropy, we define $ g(x, y) = y $, where $ y $ is the one-hot encoding of the predicted class, and set the target as $G^* = \frac{1}{p} \mathbf{1}_p$, where $ p $ is the number of classes and $\mathbf{1}_p$ is a $p$-dimensional vector of ones.

Figure \ref{fig:point_rise} illustrates the results for the attack inflating predictive entropy on MNIST images, while Figure \ref{fig:point_lower} corresponds to the one aiming to reduce it in the notMNIST images. As Figure \ref{fig:point_rise} shows, attacks with $\epsilon = 0.5$ are highly effective at increasing predictive entropy for MNIST test images, making the BNN nearly useless for OOD detection purposes. Conversely, Figure \ref{fig:point_lower} shows that while stronger perturbations progressively lower the predictive entropy of notMNIST images, this attack proves more challenging than increasing uncertainty. Notably, an attack with $\epsilon = 0.5$ nearly eliminates the distinction in predictive entropy between ID and OOD samples.

Figure \ref{fig:acc_reject} examines the impact of attacks on a balanced test set of MNIST and notMNIST samples. The experiment simulates an uncertainty-based filtering approach, where a fraction of samples with the lowest predictive entropy is retained, and accuracy is computed on this subset. The $x$-axis represents the percentage of retained samples, while the $y$-axis shows the corresponding classification accuracy. In  absence of an attack, the model effectively utilizes predictive uncertainty, achieving high accuracy by primarily selecting ID samples. 
  After the attack, accuracy drops sharply across all thresholds, indicating that the model no longer distinguishes between ID and OOD samples based on uncertainty.

 Consider now PPD attacks, where the adversarial perturbation is optimized to align the PPD with a high-entropy target distribution (a categorical distribution assigning equal probability to every class) for MNIST data, or a low-entropy target (a deterministic distribution) for notMNIST data. Figures \ref{fig:distr_rise} and \ref{fig:distr_lower} illustrate the effects of these attacks. As in the previous case, inflating entropy proves easier than reducing it. Notably, even small perturbations induce significant shifts in uncertainty estimates, severely undermining the reliability of the model’s predictive uncertainty for OOD detection. Further results, including SEPs for both attack methods, FGSM-inspired approximations, and attacks on deep ensembles, are in Section \ref{app:mnist} of the SM.

Together, these experiments demonstrate the susceptibility of VI-based BNNs to adversarial perturbations targeting uncertainty. This underscores the need for robust defense mechanisms in probabilistic classification settings, ones that extend beyond traditional attacks on posterior means (such as PGD and FGSM) to counter more sophisticated uncertainty-targeting attacks like those here introduced.

\begin{figure}[ht]
\begin{multicols}{2}
    \includegraphics[width=\linewidth]{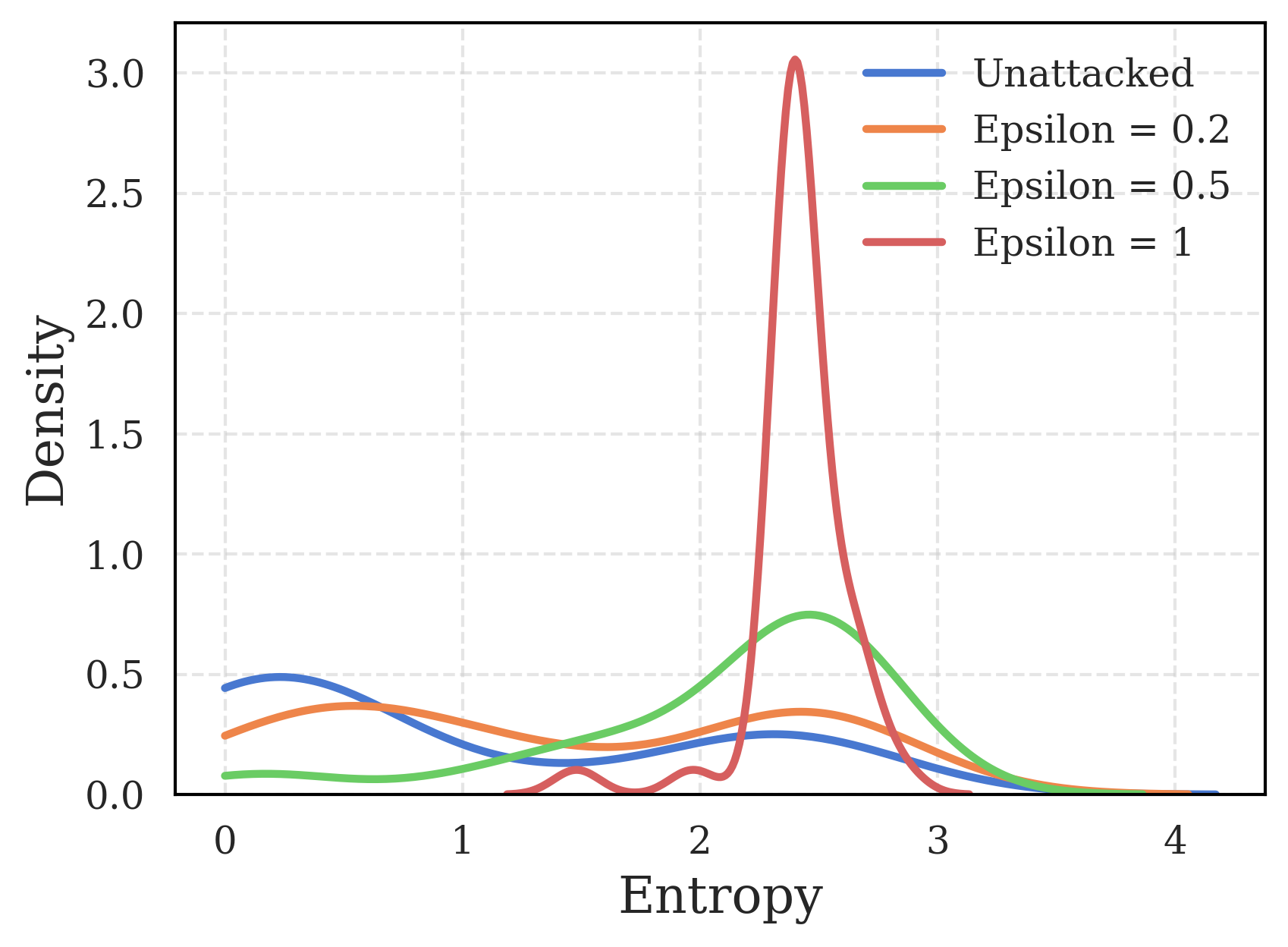}
    \subcaption{Inflating the entropy for MNIST data.}\label{fig:distr_rise}\par  
    \includegraphics[width=\linewidth]{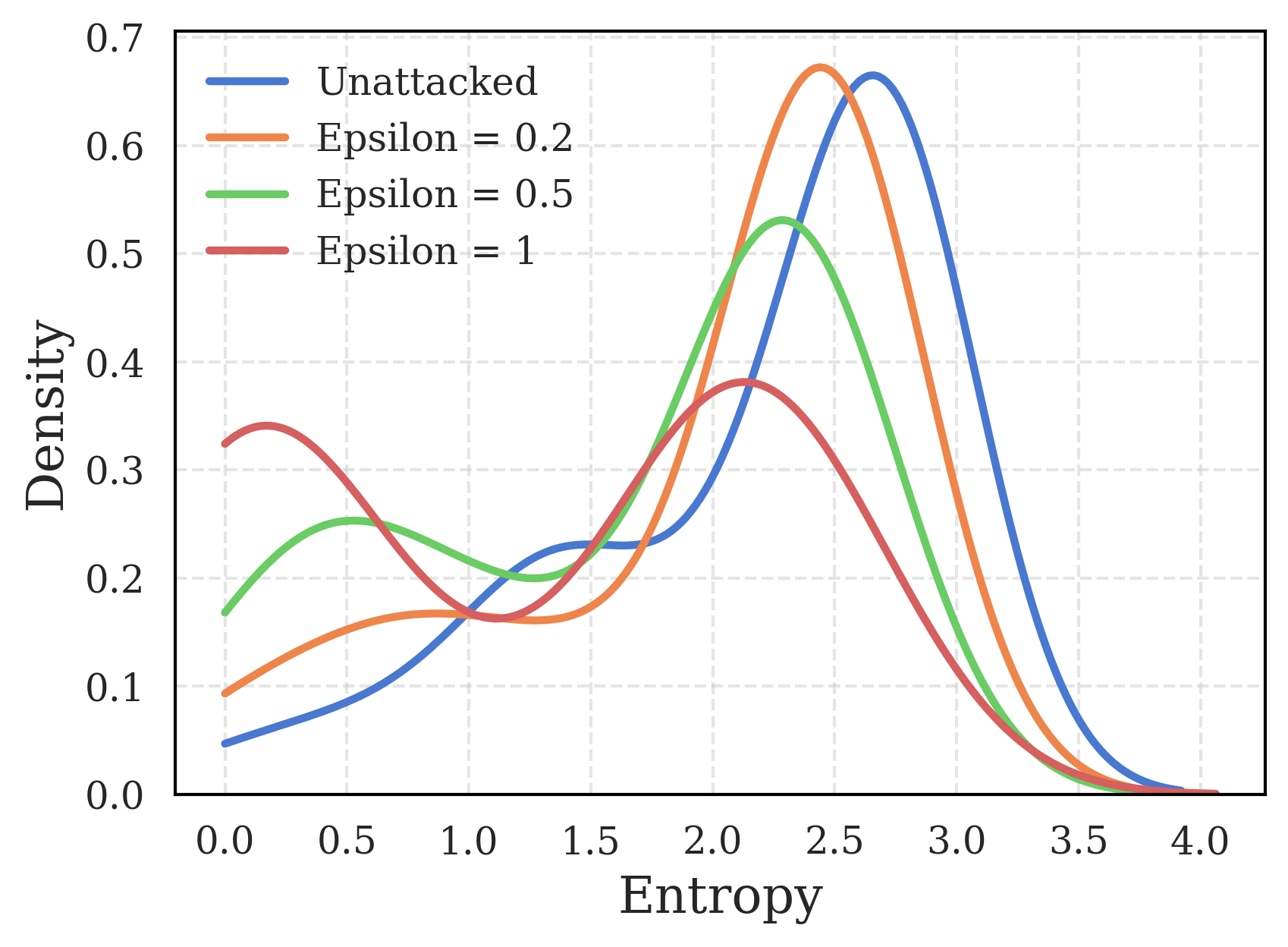}
    \subcaption{Lowering the entropy for notMNIST data.}\label{fig:distr_lower}\par 
    \end{multicols}
\caption{Attacks to full PPD.}
\label{fig:distr_entrop}
\end{figure}

%%%%%%%%%%%%%%%%%%%%%%%%%%%
%%%%%%%%%%%%%%%%%%%%%%%%%%%
\subsection{GRAY-BOX ATTACKS}

This Section explores the transferability of our attacks under more constrained information scenarios. We conduct experiments using the setup in Section~\ref{sec:wine}, specifically, under the following assumptions:

\begin{enumerate}
    \item \textbf{Unknown Architecture}: The attacker has no knowledge of the defender’s specific BNN architecture. Attacks are thus computed based on one architecture but evaluated against a different one.

    \item \textbf{Limited Training Data}: The attacker has access only to one-third of the training dataset. Attacks constructed using this reduced dataset are tested against a defender model trained on the full dataset.

    \item \textbf{Partial Feature Knowledge (Best Predictors)}: 
    The attacker knows only the 7 most predictive features (based on predictive power) out of the 11 available features. Attacks are computed assuming those 7 predictors but evaluated against a defender using all 11 features.

    \item \textbf{Partial Feature Knowledge (Worst Predictors)}: Same as the previous scenario, but this time the attacker has access only to the 7 least predictive features.
\end{enumerate}

For each setting, we run the attack procedure four times with different random initializations, where each initialization involves small perturbations around the initial point used for stochastic gradient descent. We then report the mean and standard deviation of the resulting performance metric.

Consider first attacks targeting the PPD mean. 
 As in Section~\ref{sec:wine}, let us focus on the case in which the defender aims to modify the wine indicators to shift the expected market value of the produced wine, given by $g(x, y) = \exp\left(\nicefrac{y^2}{100}\right)$, towards a target value $G^*$. Figure~\ref{fig:rmse_graybox} represents the RMSE between the achieved expected value and the desired adversarial target value $G^* = 3$. The results indicate that adversaries with knowledge only of the dataset (without access to the model architecture) achieve performance nearly identical to those conducting full white-box attacks, with both approaches attaining optimal targeting at $\epsilon = 0.50$. Similarly, attacks utilizing only the 7 most predictive features maintain strong effectiveness (RMSE = 0.43 at $\epsilon = 0.50$), significantly outperforming attacks restricted to the 7 least predictive features (RMSE = 1.36). Notably, even when restricted to the least predictive features, attacks remain reasonably effective, although less so than those leveraging more informative features. As anticipated, limited data access (1/3 of the dataset) reduces attack success (RMSE = 0.73 at $\epsilon = 0.50$), though the degradation is less severe than might be expected given the substantial reduction in available information.

\begin{figure}[ht]
\begin{multicols}{2}
    \includegraphics[width=\linewidth]{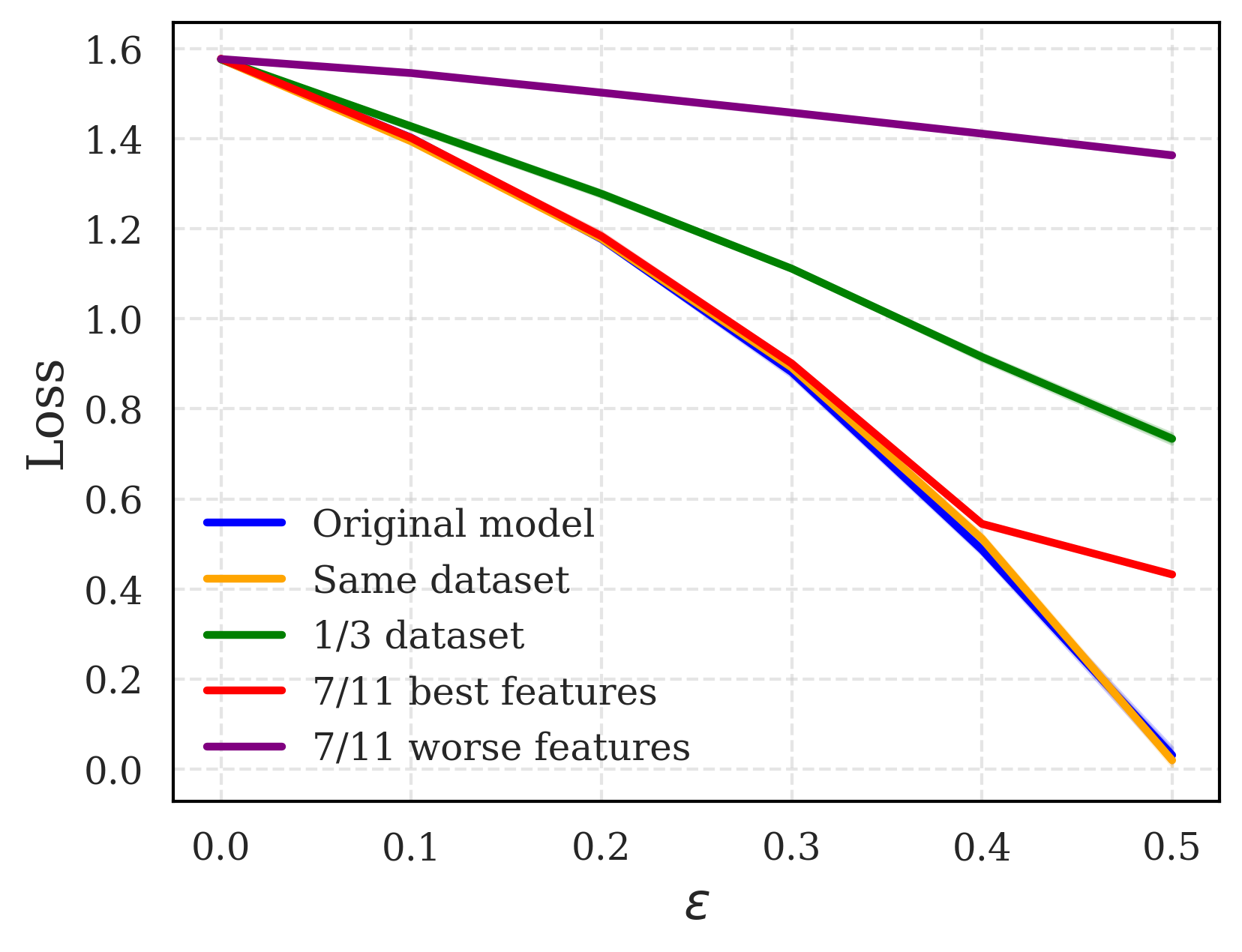}
    \subcaption{Point attacks.}\label{fig:rmse_graybox}\par  
    \includegraphics[width=\linewidth]{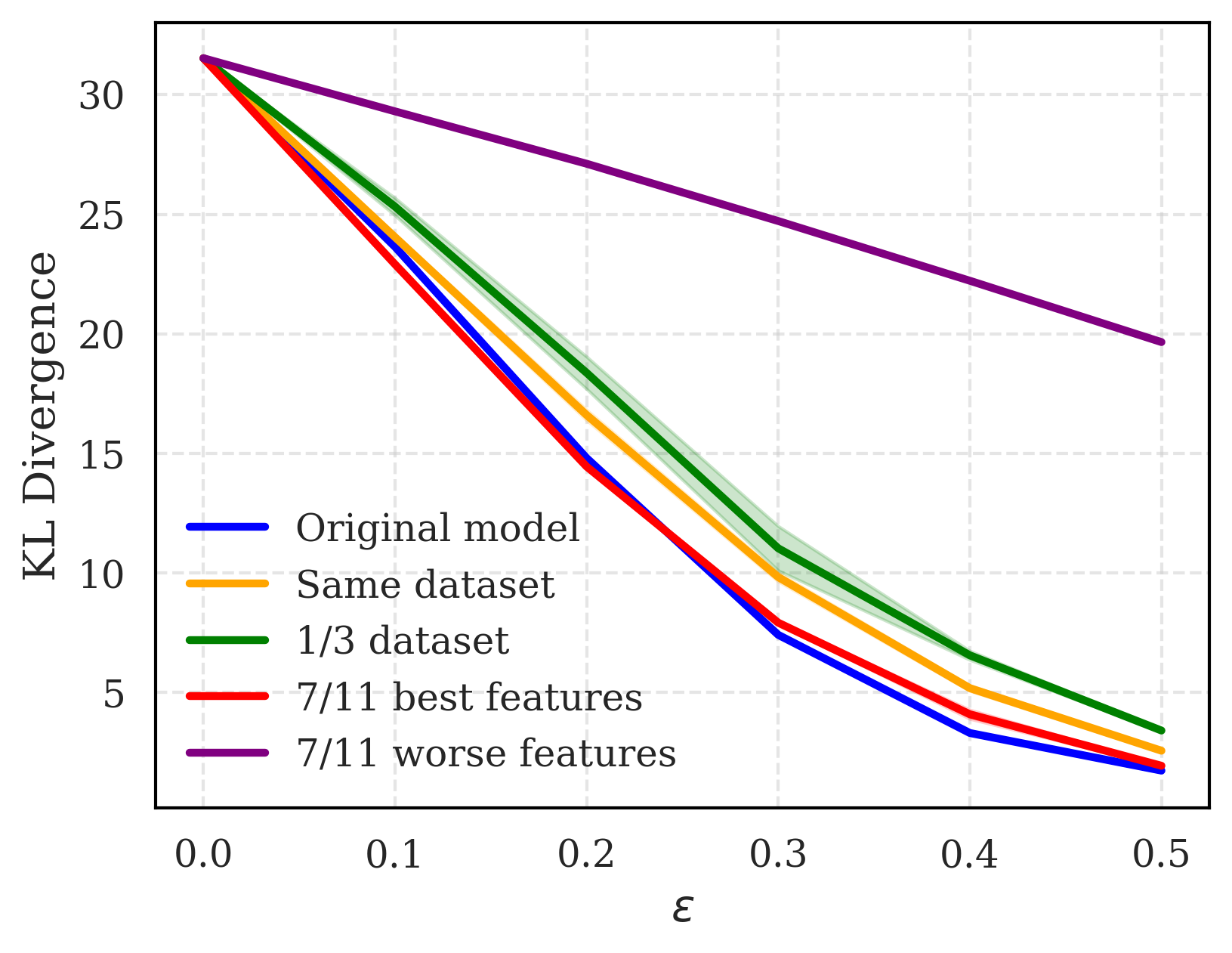}
    \subcaption{Attacks to full PPD.}\label{fig:kl_graybox}\par 
    \end{multicols}
\caption{SEP of gray-box attacks.}
\label{fig:wine_gray}
\end{figure}

We now examine gray-box attacks targeting the full PPD. Similar to  Section~\ref{sec:wine}, we focus on the case in which the defender aims to modify the wine indicators to shift the PPD towards a Gaussian with twice the mean and twice the standard deviation of the original PPD. Figure~\ref{fig:kl_graybox} represents the KL between the desired APPD and the distribution achieved by each attack variant. Notably, only the attack constrained to the 7 worst features shows significant performance degradation (KL = 19.63 at $\epsilon = 0.50$). The remaining gray-box scenarios demonstrate remarkable transferability, with the best-feature attack even outperforming the white-box baseline at small perturbation sizes. These results suggest that distribution-level attacks maintain effectiveness across various information constraints, as long as the adversary has access to quality features or representative data.

In summary, Figure \ref{fig:wine_gray} confirms that the performance degradation under gray-box constraints is modest, particularly when the attacker has access to either the full dataset or just the most informative features. These findings suggest that while gray-box constraints do reduce attack performance, practical adversaries with high-quality partial information can still mount effective attacks. Further discussion on gray-box attacks is available in the SM, Sections E and F.3.

%%%%%%%%%%%%%%%%%%%%%%%%%%%
\section{CONCLUSIONS}

We introduced two novel families of evasion attacks against Bayesian predictive models targeting both point predictions and the full PPD, 
 mainly in a white-box setting. The attacks apply broadly to classification and regression, requiring only sampling access to the PPD. Crucially, they can be applied to directly manipulate predictive uncertainty, an aspect largely overlooked in prior AML research. Our experiments reveal that even small perturbations can undermine predictive uncertainty, severely compromising OOD detection and exposing significant vulnerabilities in VI-based BNNs.

While our attacks highlight potential mitigation strategies, such as reducing collinearity, protecting key covariates, and leveraging security through obscurity (Section \ref{app:gray_toy} of SM), these remain only partial solutions. A security-by-design approach is essential, and our attacks offer a foundation to improve model robustness, particularly when integrated with probabilistic defensive approaches to AML \citep{riosInsua2023,gallego2024protecting}.

In summary, the proposed attacks serve primarily as diagnostic tools, identifying vulnerabilities in Bayesian  predictive systems. By adopting the defender’s perspective, our analysis aims to anticipate potential threats and inform robust, principled security improvements. Ethical responsibility must guide the deployment and use of deep learning models, especially within safety-critical applications.

%\textcolor{blue}{The attacks proposed are intended as diagnostic tools to identify vulnerabilities in Bayesian inference systems, with the goal of guiding the development of more robust defenses. We adopt the defender’s perspective, analyzing potential threats to inform principled improvements in security. Ethical responsibility must ultimately guide the deployment of deep learning systems, particularly in safety-critical settings.}

\FloatBarrier

%\begin{contributions} % will be removed in pdf for initial submission 
					  % (without ‘accepted’ option in \documentclass)
                      % so you can already fill it to test with the
                      % ‘accepted’ class option
    %Briefly list author contributions.     This is a nice way of making clear who did what and to give proper credit.     This Section is optional.    H.~Q.~Bovik conceived the idea and wrote the paper.    Coauthor One created the code.    Coauthor Two created the figures.
%\end{contributions}

\begin{acknowledgements} % will be removed in pdf for initial submission,
						 % (without ‘accepted’ option in \documentclass)
                         % so you can already fill it to test with the
                         % ‘accepted’ class option
This research was supported by the Spanish Research Agency under the \textit{Proyectos de Generación de Conocimiento 2022} grants No. PID2022-137331OB-C33 and PID2021-124662OB-I00  the EOARD-AFOSR project RC2APD, GRANT  13324227; the European Union’s Horizon 2020 Research
and Innovation Programme under Grant Agreement No. 101021797 (STARLIGHT).
PGA is a staff member hired under the Generation D initiative, promoted by Red.es, an organisation attached to the Ministry for Digital Transformation and the Civil Service, for the attraction and retention of talent through grants and training contracts, financed by the Recovery, Transformation and Resilience Plan through the European Union's Next Generation funds.

\end{acknowledgements}

% References
\bibliography{biblio}

\onecolumn

\title{Evasion Attacks Against Bayesian Predictive Models\\(SM)}
\maketitle

\appendix

%%%%%%%%%%%%%%%
\section{PROOFS}
\subsection{Proof of Proposition 1}\label{app:proof1}
\begin{proof}
    First, applying the chain rule to $\|\,\cdot\,\|_2^2$, we have
    \begin{equation*}
       \nabla_{x'} J(x')
       ~=~
       2\,\bigl(\mu(x') - G^*\bigr)^\top 
       \,\nabla_{x'} \mu(x').
    \end{equation*}
    Under the assumptions, the Dominated Convergence Theorem (DCT) applies and the derivative passes inside the expectation. Finally, using the standard log-derivative trick \citep{mohamed2020monte} $\nabla_{x'} \mu(x')$, can be expressed as in \eqref{eq:point_att}.
\end{proof}

\subsection{Proof of Proposition 2}\label{app:proof2}
\begin{proof}
By the DCT, we can pass $\nabla_{x'}$ within the outer expectation and the gradient can be expressed as
    \begin{align*}
        &- \mathbb{E}_y \left[ \nabla_{x'} \log \pi(y | x', \mathcal{D} )\right] = - \mathbb{E}_y \left[ \frac{\mathbb{E}_{\gamma | \mathcal{D}} \left[ \nabla_{x'} \pi(y | x', \gamma )\right] }{\mathbb{E}_{\gamma | \mathcal{D}} \left[ \pi(y | x', \gamma )\right]}\right].
        % &- \mathbb{E}_y \left[ \frac{ \nabla_{x'} \mathbb{E}_{\gamma | \mathcal{D}} \left[  \pi(y | x', \gamma )\right] }{\mathbb{E}_{\gamma | \mathcal{D}} \left[ \pi(y | x', \gamma )\right]}\right] = \\
    \end{align*}
In the last step, we use again DCT to pass $\nabla_{x'}$ inside the expectation over $\gamma$.    
\end{proof}

%%%%%%%%%%%%%%
\section{BAYESIAN LINEAR REGRESSION UNDER A NORMAL INVERSE GAMMA PRIOR}\label{app:normalinversegamma}

This Section details the specific Bayesian linear regression setup used in several of the experiments and the earlier analytical results. Consider the conjugate case of a linear regression model with normal-inverse-gamma priors 
\begin{equation}
    \beta \,|\, \sigma^2 \sim \mathcal{N}(\mu_0, \sigma^2 \Lambda_0^{-1}),  \,\,
    \sigma^2 \sim   \text{Inv-Gamma}(a_0, b_0) .
     \label{eq:normal_inverse_setting}
\end{equation}
Given $\mathcal{D}$, the posterior distributions for $\beta$ and $\sigma^2$ follow a normal-inverse-gamma distribution
with parameters \citep{smith1973general}
\begin{equation*}
        \mu_n = \Lambda_n^{-1} (\Lambda_0 \mu_0 + X^\top y), 
        \,\,  \Lambda_n = \Lambda_0 + X^\top X ,
        \end{equation*}
        \begin{equation*}
            a_n = a_0 + \frac{n}{2}, \,\,
    b_n = b_0 + \frac{1}{2} (y^\top y + \mu_0^\top \Lambda_0 \mu_0 - \mu_n^\top \Lambda_n \mu_n).
\end{equation*}
%
%based on the observed data. 
Then, the PPD for a new feature vector $x$, the corresponding response $y$ is a $t$-distribution with $2a_n$ degrees of freedom, mean $x^\top \mu_n$, and scale parameter $\frac{b_n}{a_n} \left( 1 + x^\top \Lambda_n^{-1} x \right)$. 

Notice that when $ \mu_0 = 0 $ and as $ \Lambda_0 \to 0 $, the maximum a posteriori (MAP) for $\beta | \sigma^2$ approaches the standard least squares estimator.
Besides, when $ \mu_0 = 0 $ and $ \Lambda_0 = cI $, the MAP 
will be the ridge regression estimator \citep{bishop2006pattern}.

%%%%%%%%%%%%%%%
\section{ANALYTICAL CONSIDERATIONS ON ATTACKS}

This Section covers several aspects concerning analytical versions of the
proposed attacks.

%%%%
\subsection{Point prediction attacks}\label{app:point_analytical}

When analytical expressions for the posterior and the objective function exist, we employ general-purpose optimizers to approximate the attack \citep{powell2019unified}. Whether this is possible depends on the function $g$, and the geometry of $\mathcal{X}$.

A particularly common scenario arises when the adversary seeks to subtly modify covariates $x$ to steer the posterior predictive mean towards a value $y^*$. This corresponds to setting $g(x', y) = y$ 
  and $G^* = \{ y^* \}$. Consider a Bayesian linear regression model with normal-inverse-gamma prior. In this case, the PPD for a given $x$  follows a $t$-distribution with mean $ x^\top \mu_n $, where $ \mu_n $ depends on the prior parameters and observed data $ \mathcal{D} $.
%By imposing a constraint on the norm of the difference between $x'$ and the original covariates $x$,
Thus we can formulate the optimization problem
\begin{equation*}
    \min_{x'} \, | \mu_n^\top x' - y^* | \quad \text{s.t.} \quad \Vert x' - x \Vert \leq \epsilon.
\end{equation*}
Defining the shift $r = x' - x$, the problem is reformulated as
\begin{eqnarray*}
    \min_{r}& |\mu_n^\top r - \alpha| \quad \text{s.t} \quad \Vert r \Vert \leq \epsilon,
\end{eqnarray*}
where $\alpha = y^* - \mu_n^\top x$. 

Specific solutions depend on the 
chosen constraining norm. For instance, under the $L_2$ norm, Hölder's inequality implies that $|\mu_n^\top r| \leq \Vert \mu_n \Vert_2 \cdot \Vert r \Vert_2  \leq \Vert \mu_n \Vert_2 \epsilon$: the adversarial posterior mean will be achievable whenever $|\alpha| \leq \epsilon \Vert \mu_n \Vert_2$, and the optimal  adversarial data perturbation will be
\begin{equation}\label{mean_att_l2}
   r^* = \frac{\alpha}{\Vert \mu_n \Vert_2^2}\mu_n \Rightarrow x' = x +  \frac{y^* - \mu_n^\top x}{\Vert \mu_n \Vert_2^2}\mu_n.
\end{equation}
%When $|\alpha| > \epsilon \Vert \mu_n \Vert_2$, then
In the other case, 
$r$ should be set to the boundary of the ball in the direction of $\mu_n$,
  so that 
\begin{equation*}
   r^* = \text{sign}(\alpha)\frac{\epsilon}{\Vert \mu_n \Vert_2}\mu_n \Rightarrow x' = x +  \text{sgn}(y^* - \mu_n^\top x) \frac{\epsilon}{\Vert \mu_n \Vert_2}\mu_n.
\end{equation*}
This means that an adversarial posterior predictive mean is achievable under a perturbation $x'$ of $x$ such that $\Vert x' - x \Vert_2 \leq \epsilon$ whenever $|y^* - \mu_n^\top x| \leq \epsilon \Vert \mu_n \Vert_2$.

Similarly, under the $L_\infty$ norm constraint, Hölder's inequality
  entails $|\mu_n^\top r| \leq \Vert \mu_n \Vert_1 \epsilon$. Thus, if $|\alpha| \leq \Vert \mu_n \Vert_1 \epsilon$, the adversarial posterior mean is achievable with 
\begin{equation*}
   r^* =  \frac{\alpha}{\Vert \mu_n \Vert_1} \text{sgn}(\mu_n) \Rightarrow x' = x +  \frac{y^* - \mu_n^\top x}{\Vert \mu_n \Vert_1} \text{sgn}(\mu_n) ,
\end{equation*}
else we make
\begin{eqnarray*}
   r^* &=&   \epsilon  \cdot \text{sgn}(\mu_n) \cdot \text{sgn}(\alpha) \Rightarrow \\
   x' &=& x + \epsilon  \cdot \text{sgn}(\mu_n) \cdot \text{sgn}(y^* - \mu_n^\top x).
\end{eqnarray*}

%%%%%%%%%%%%%%%%%%%%%%%%%
\subsection{FULL PPD ATTACKS}\label{app:distr_analytical}

In order to have a closed-form expression for the objective function in \eqref{eq:adv_problem_2} a necessary condition is to have an analytical PPD. However, even if this is the case, the expectation required to compute the KL divergence might not have an analytical expression. For instance, in a Bayesian linear regression model with normal-inverse-gamma prior, the PPD is a Student's $t$-distribution. If we consider the APPD to be within the same family but with different location and/or scale parameters, we will not have a closed-form expression 
  as the KL divergence between two $t$-distributions lacks an analytical form. 

Should an analytical expression for the KL divergence in \eqref{eq:adv_problem_2}  be available, general-purpose optimizers \citep{powell2019unified} could be employed to approximate the attack. The choice of optimizer
will depend on the geometry of the objective function. In most cases, the 
 required KL divergence will not be convex in $x'$, necessitating the use of e.g.\ PGD, sequential quadratic programming, or interior point methods \citep{gondzio_InteriorPointMethods_2012}. 
As an example, in the Bayesian linear regression setting with known variance $\sigma^2$,  under a normal prior $\mathcal{N}(\mu_0, \Lambda_0^{-1})$ on the $\beta$ coefficients, the posterior predictive distribution $y | x, X, y$ is normal with mean $x^\top \mu_n$ and variance $x^\top \Lambda_n^{-1} x + \sigma^2$, where
%
%
%\begin{eqnarray}
 $  \Lambda_n = \Lambda_0 + \frac{1}{\sigma^2} X^\top X$ and
 $   \mu_n = \Lambda_n^{-1} \left(\Lambda_0 \mu_0 + \frac{1}{\sigma^2} X^\top y \right)$.
For a normal APPD with mean $\mu_A$ and variance $\sigma_A^2$, the KL divergence \eqref{eq:adv_problem_2} is 
\begin{align*}
    \frac{1}{2} \log \left( \frac{x'^T \Lambda_n^{-1} x' + \sigma^2}{\sigma_A^2} \right) + \frac{\sigma_A^2 + (\mu_A - x'^T \mu_n)^2}{2(x'^T \Lambda_n^{-1} x' + \sigma^2)} - \frac{1}{2}.
\end{align*}
The second term is the quotient of two quadratic forms in $x$, which is generally non-convex, and constrained non-convex optimization techniques will be required to find the optimal solution.

%%%%%%%%%%%%%%%%%%%%%
\section{COMPUTATIONAL COST OF ALGORITHMS 1 AND 2}\label{app:comp}

In both Algorithms 1 and 2, the computational time per iteration of stochastic gradient descent is largely driven by the number of posterior samples used, especially when these samples are generated via MCMC.
\begin{itemize}
\item For Algorithm 1, this number of samples is predetermined and fixed at $N + M$.

\item For Algorithm 2, the number of posterior samples drawn depends on the index selected at each iteration. Let $R$ denote the number of indices selected per iteration. If index $\ell$ is chosen, $M_\ell$ posterior samples need to be drawn, where $M_\ell = M_0 \cdot 2^\ell$. The probability of selecting index $\ell$ is $ \omega_\ell $, with weights $ \omega_\ell \propto 2^{-\tau \ell} $. Thus, the expected number of posterior samples for one index selection is $\sum_{\ell} \omega_\ell M_\ell $. Since $R$ indices are selected per iteration, the total expected number of posterior samples per iteration is: 
\[
\mathbb{E}[\text{Number of posterior samples per iteration}] = R \cdot \sum_{\ell} \omega_\ell M_\ell.
\]
%
% The expected number of samples drawn from index $ \ell $ is given by the product of the probability of selecting that index, $ \omega_\ell $, and the number of samples associated with it, $ M_\ell $.
Substituting the expressions of $ M_\ell $ and $ \omega_\ell $, we have:
\[
\mathbb{E}[\text{Number of posterior samples per iteration}] = R \cdot \sum_{\ell} \left( C \cdot 2^{-\tau \ell} \right) \cdot M_0 2^\ell = R \cdot M_0 \cdot C \cdot \sum_{\ell} 2^{(1-\tau)\ell},
\]
where $ C $ is a normalization constant ensuring $ \sum_{\ell} \omega_\ell = 1 $.
Now, the geometric series $ \sum_{\ell} 2^{(1-\tau)\ell} $ converges if $ \tau > 1 $ to $\frac{1}{1 - 2^{-(\tau - 1)}}$.
Thus, the total expected number of posterior samples generated per iteration of Algorithm 2 is:

\[
\mathbb{E}[\text{Total samples}] = R \cdot M_0 \cdot \frac{C}{1 - 2^{-(\tau - 1)}}= R \cdot M_0 \cdot \frac{1-2^{-\tau}}{1 - 2^{-(\tau - 1)}}.
\]
\end{itemize}

%%%%%%%%%%%%%%%%%%%%%%%%%%%%%%%%
\section{BAYESIAN ANALYSIS OF GRAY-BOX ATTACKS}\label{app:gray_theory}

By leveraging the Bayesian framework, our attacks naturally extend to the gray-box setting in a principled and effective way. A particularly elegant approach to modeling the attacker's uncertainty about the defender's system is to posit a set of candidate models $\mathcal{M}_1, \ldots, \mathcal{M}_K$, each weighted by a prior belief $\pi_1, \ldots, \pi_K$. This leads the attacker to compute a posterior predictive distribution by marginalizing over model uncertainty:

% Thanks to working within a Bayesian framework, extending our proposed attacks to a gray-box setting in a theoretically grounded manner is both straightforward and powerful. While there are various ways to model the attacker's uncertainty about the defender's system, one particularly elegant approach is to explicitly represent this uncertainty through a family of candidate models $\mathcal{M}_1, \ldots, \mathcal{M}_K$, each assigned a prior probability $\pi_1, \ldots, \pi_K$ reflecting the attacker's beliefs. Under this formulation, the attacker computes a PPD by marginalizing over model uncertainty, as follows:
\[
\pi(y \mid x', D) = \sum_{i=1}^K \pi_i \int \pi(y \mid \gamma_i, x')\, \pi(\gamma_i \mid \mathcal{M}_i, D)\, d\gamma_i,
\]
where $\pi(\gamma_i \mid \mathcal{M}_i, D)$ denotes the posterior over the parameters of the $i$-th model. The  formulation closely mirrors the principles of Bayesian Model Averaging \citep{hoeting1999bayesian}, incorporating model uncertainty into the attack design. As a result, the attacks proposed in our paper can be seamlessly and rigorously extended to the gray-box scenario by substituting the original PPD with this model-averaged version.

Computationally, this extension requires only a modest additional step in the gradient estimation procedure, involving sampling a model according to its prior probability. Furthermore, if the attacker has access to outputs from the defender’s model, they can iteratively refine these model priors in a fully Bayesian manner, further enhancing attack effectiveness as more information becomes available.

Importantly, this approach generalizes beyond a finite set of models: by placing continuous priors over model parameters, the discrete sum naturally extends to an integral over the model space, allowing the attacker to consider an infinite family of candidate models.

%%%%%%%%%%%%%%%%%%%%%%%%%%%%%%%
\section{ADDITIONAL RESULTS FOR SECTION 5.1}

%%%%%%%%%%%%%%%%%%%%%%%%%
\subsection{UNBIASEDNESS OF GRADIENT ESTIMATES}\label{app:unb_toy}

In the synthetic data setting from Section 5.1, we consider a regression model with known variance. Under a conjugate normal prior for the regression coefficients, the PPD has analytical form, following a $t$ distribution, Section B. Consequently, for both types of attacks, we can compute the corresponding objective functions and their gradients analytically. This allows us to empirically assess the unbiasedness of the gradient estimates for attacks on point predictions and full 
 PPDs produced by Algorithms 1 and 2, respectively. Figures \ref{subfig:point_gradient_estim_indep} and \ref{subfig:distr_gradient_estim_indep} show histograms of the gradient coordinate samples for both attack types, with red vertical lines indicating the actual gradient component values. These plots suggest that the gradient estimates produced by Algorithms 1 and 2 are indeed unbiased.

\begin{figure}[ht]


\begin{multicols}{2}
    \includegraphics[width=\linewidth]{img/toy_indep/point_gradient_estim.png}
    \subcaption{Attacks targeting point predictions.}\label{subfig:point_gradient_estim_indep}\par 
    \includegraphics[width=\linewidth]{img/toy_n10/distr_gradient_estim.png}
    \subcaption{Attacks targeting full PPD.}\label{subfig:distr_gradient_estim_indep}\par 
    \end{multicols}
\caption{Estimated gradient distribution and true value for the synthetic dataset.}
\end{figure}

%%%%%%%%%%%%%%%%%%%%%%%%%%%%%%%%%%%%%%%%%%%%%%%%%%%%
\FloatBarrier
\subsection{IMPACT OF COLLINEARITY ON ATTACKS TARGETING PREDICTIVE UNCERTAINTY}\label{app:coll_toy}

Section 5.1 demonstrated that, with large sample sizes, predictive uncertainty is primarily driven by aleatoric uncertainty, since epistemic uncertainty diminishes as the dataset grows. Since aleatoric uncertainty cannot be altered by manipulating covariates, attacks targeting predictive uncertainty are generally ineffective in such cases. However, in this Section, we observe that even with a large sample size, successful attacks targeting predictive uncertainty can still occur when there is high collinearity between regression covariates.
 To empirically demonstrate this, we generated a dataset of $n=1000$ data points using the regression setting in Section 5.1. 
   However, 
 instead of generating independent covariates from standard normal distributions, we drew them from a multivariate normal distribution with mean $(0,0)$ and covariance matrix $A^\top A$, with 
$$
A=\begin{pmatrix}
1 & 2 \\
3 & 4
\end{pmatrix}
$$ 
This induces a correlation between covariates of $\simeq 0.98$. 

% Under a normal conjugate prior with known variance, leads to $$\Lambda_n^{-1}=\begin{pmatrix}
%   0.41 & -0.42 \\
%  -0.42 &  0.44
% \end{pmatrix}
% $$ Recall that, under this prior, the PPD variance is given by $x^\top \Lambda^{-1} x + \sigma^2$, with $\sigma^2=1$. Thus, attacks aiming to increase the variance will be bounded by the size of $\Lambda{_n}^{-1}$.

\begin{figure}[ht]
\begin{multicols}{3}
    \includegraphics[width=\linewidth]{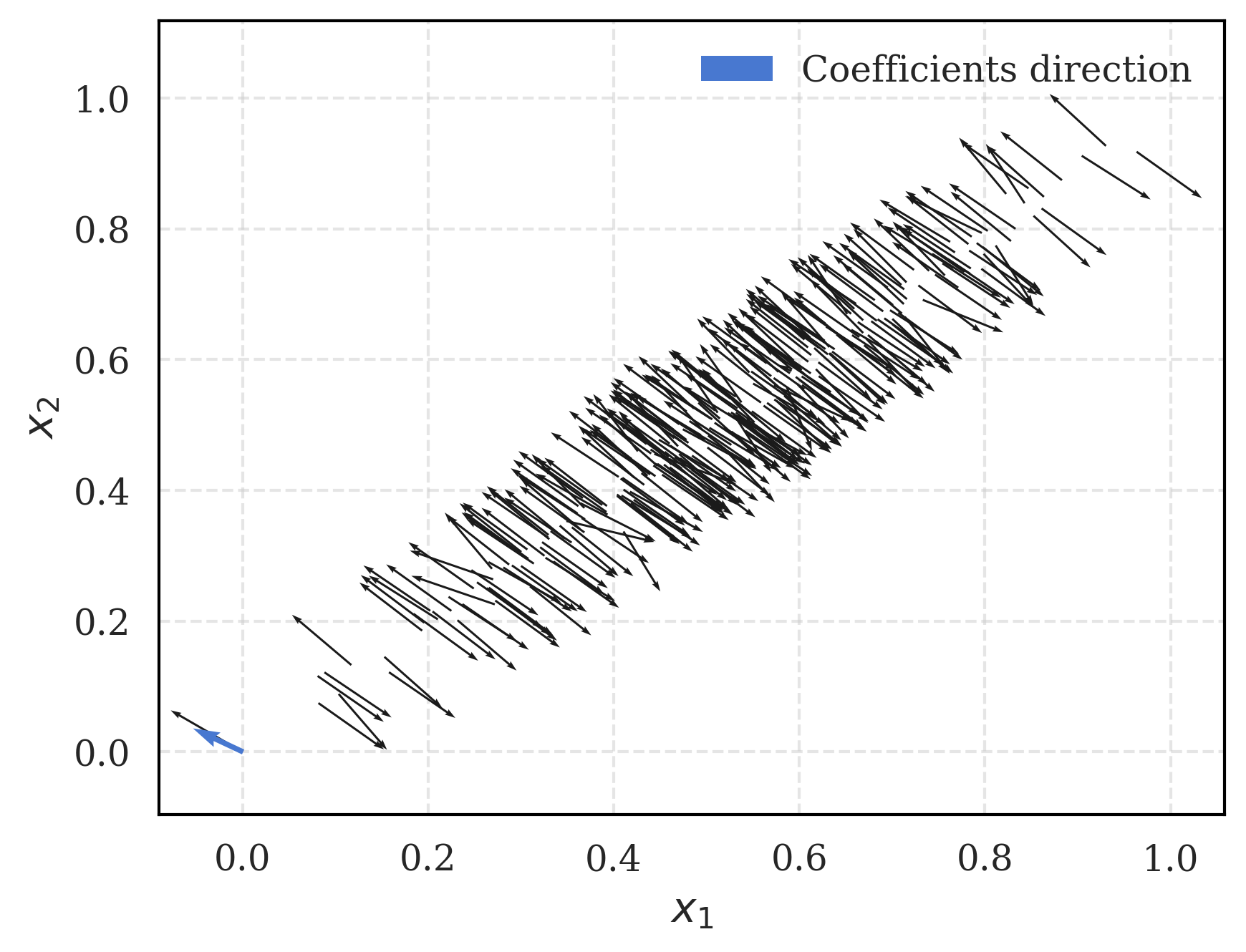}
    \subcaption{Perturbation field.}\label{subfig:distr_field_m2s}\par 
    \includegraphics[width=\linewidth]{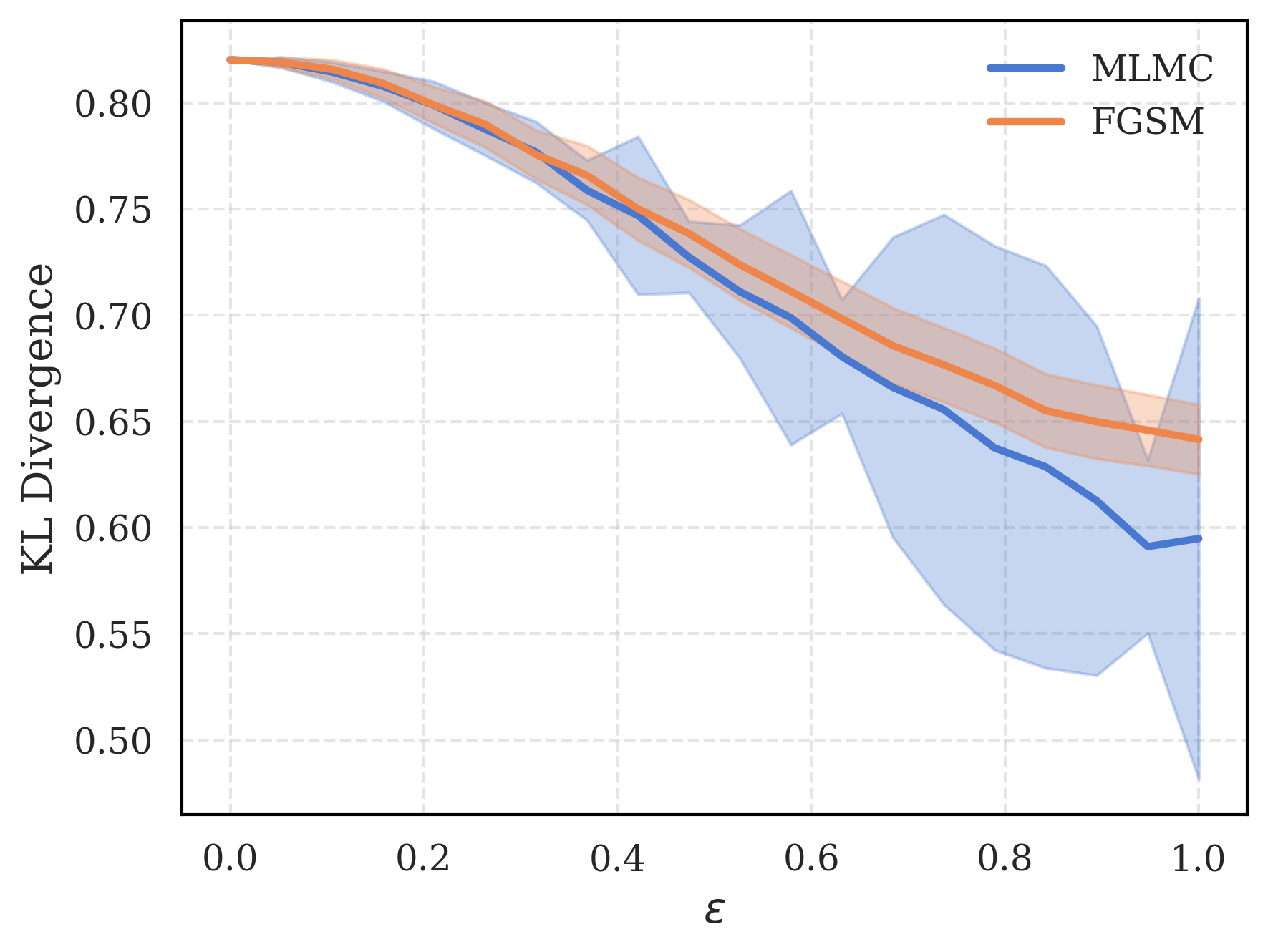}
    \subcaption{SEP.}\label{subfig:distr_m2s_eps}\par
    \includegraphics[width=\linewidth]{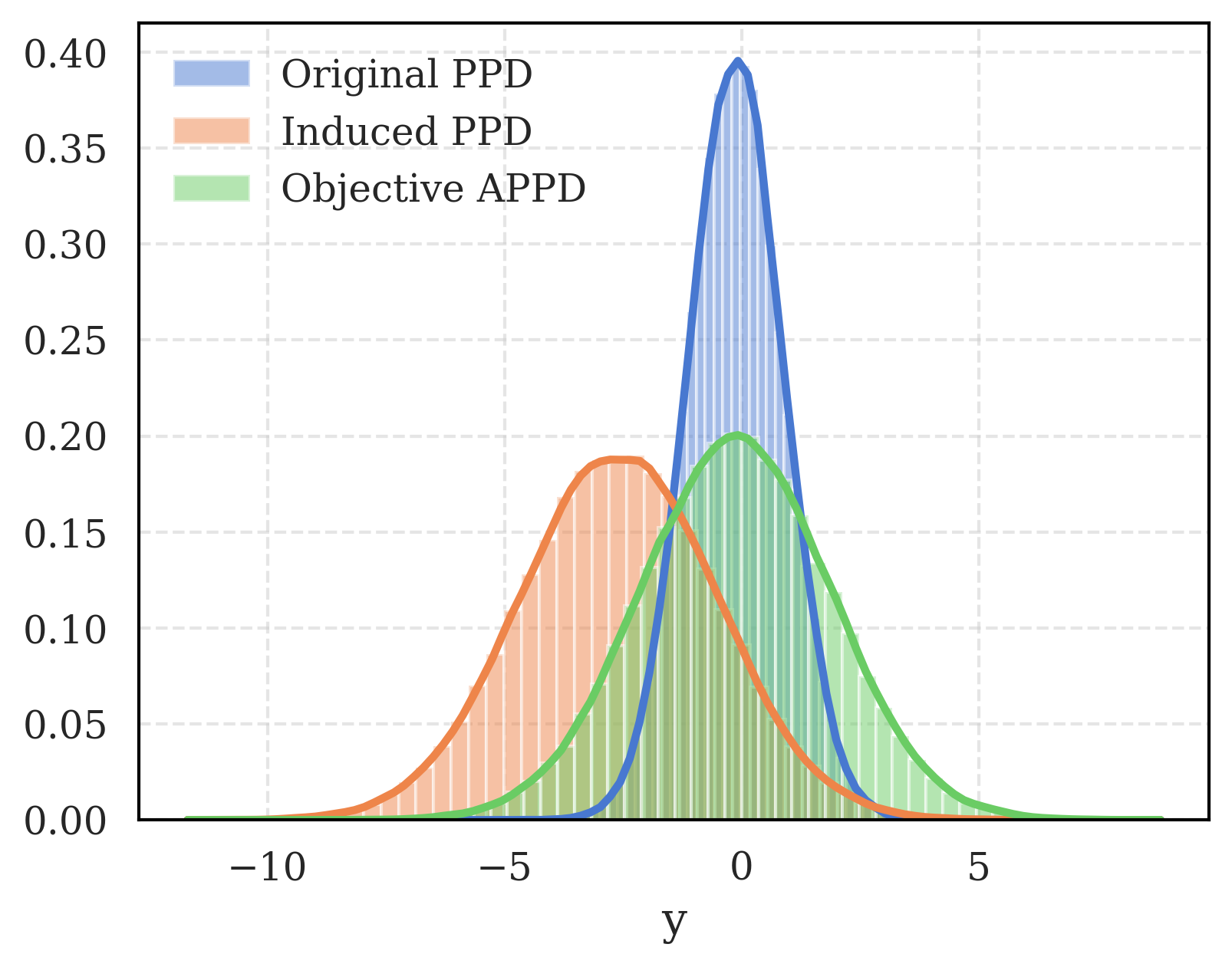}
    \subcaption{PPD attack example.}\label{subfig:distr_m2s}\par 
    \end{multicols}
\caption{Attacks targeting full PPD.}
\label{fig:sec}
\end{figure}

Figure \ref{fig:sec} evaluates the performance of an attack in this setting, where the goal is to steer the PPD of an instance with covariates $x$ towards an adversarial target with the same mean, but four times the variance. Figure \ref{subfig:distr_field_m2s} shows how test points are altered, with each attack represented by an arrow. The tail represents the unperturbed covariate vector, while the head indicates the perturbed one. Most instances shift in a direction parallel to the actual regression coefficients. Figure \ref{subfig:distr_m2s_eps} presents the KL divergence between the 
  APPD and the induced PPD for an initial point $(0.35, 0.35)$, across various attack intensities, comparing results from Algorithm 2 and FGSM. As shown, FGSM slightly underperforms compared to our method. Finally, Figure \ref{subfig:distr_m2s} compares the original, adversarial, and induced PPDs for an attack with $\epsilon = 2$. The attack increases the PPD variance but also causes an unintended shift in the mean. This shift may be compensated for by the increased variance in terms of KL divergence reduction. Notably, the same intensity attack performed on training data with uncorrelated covariates was unable to modify the PPD variance.

%%%%%%%%%%%%%%%%%%%%%%%%%%%%%%%%%%%%%%%%%%%%%%
\subsection{GRAY-BOX ATTACKS}\label{app:gray_toy}

Throughout the paper, we have mainly operated in the white-box scenario, assuming that the attacker has full knowledge of the regression model, its prior, and its posterior. However, in most real-world applications, the attacker will not have complete information about these elements. This Section empirically evaluates the impact of this lack of knowledge on the effectiveness of attacks. Specifically, we revisit the regression setting from Section 5.1, but now the attacker lacks knowledge of the prior and the data used by the regressor. To compute the attacks, $A$ uses an alternative dataset $\mathcal{D}'$, which differs from $P$'s training set $\mathcal{D}$ but is generated from the actual model. Additionally, $A$ uses different priors for the parameters, specifically $\beta_0, \beta_1\sim \mathcal{N}(0, 1/2)$.

Figure \ref{subfig:gray_eps} presents the SEP for attacks targeting the mean of the PPD, with the same initial point and target as in Section 5.1, comparing the white-box and gray-box settings. While low-intensity attacks perform similarly in both cases, the gray-box attack significantly underperforms beyond an intensity of 0.3. This performance gap is even more pronounced for attacks targeting the full PPD. Figure \ref{subfig:gray_distr} illustrates the performance of full PPD attacks aimed at increasing posterior uncertainty (by setting the APPD to have four times the variance of the original PPD) while keeping the same mean. In the gray-box setting, no successful attacks were found, regardless of attack intensity. These experiments highlight the impact that {\em security through obscurity} measures could have on the performance of attacks.

\begin{figure}[ht]
\begin{multicols}{2}
    \includegraphics[width=\linewidth]{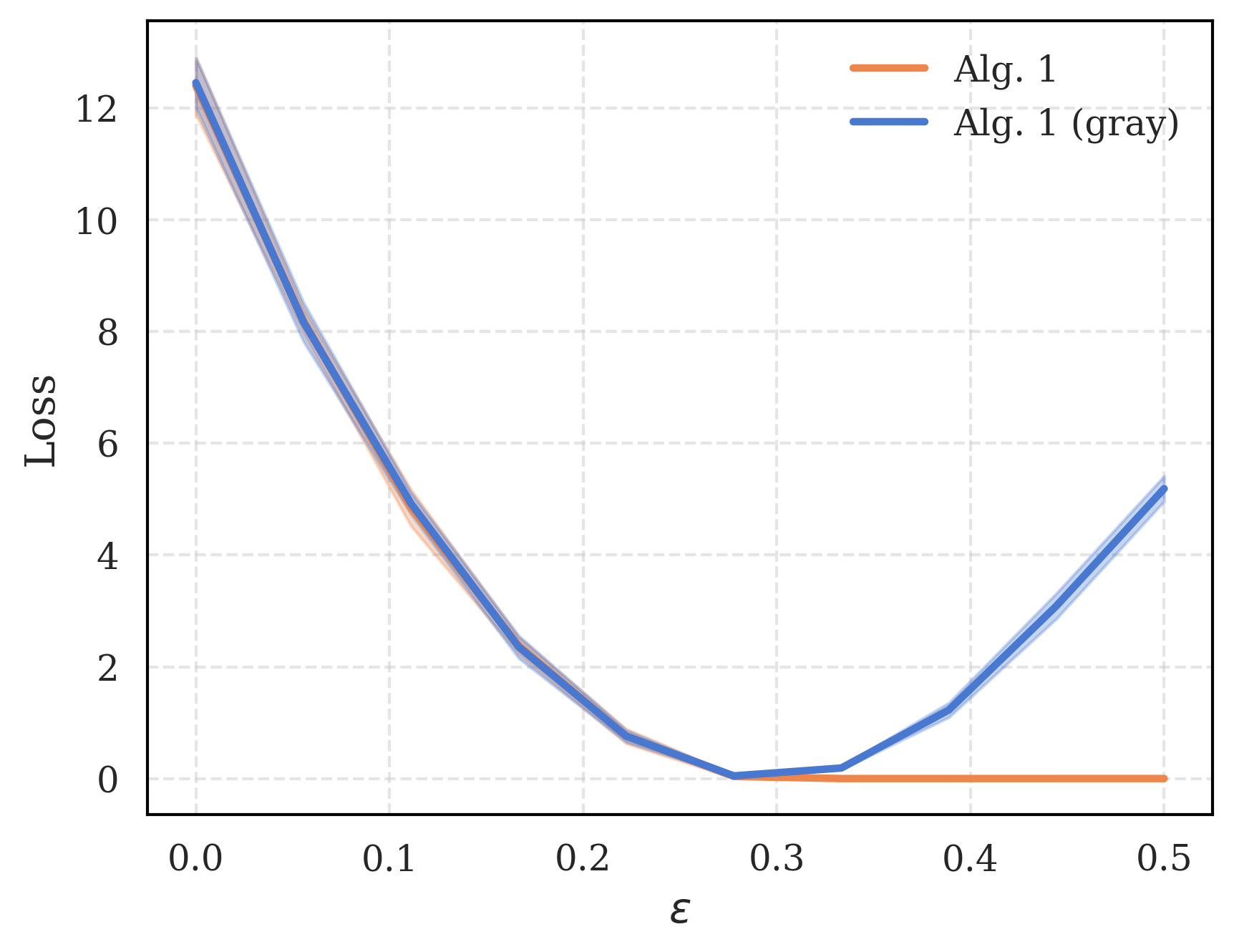}
    \subcaption{Attack targeting point predictions.}\label{subfig:gray_eps}\par 
    \includegraphics[width=\linewidth]{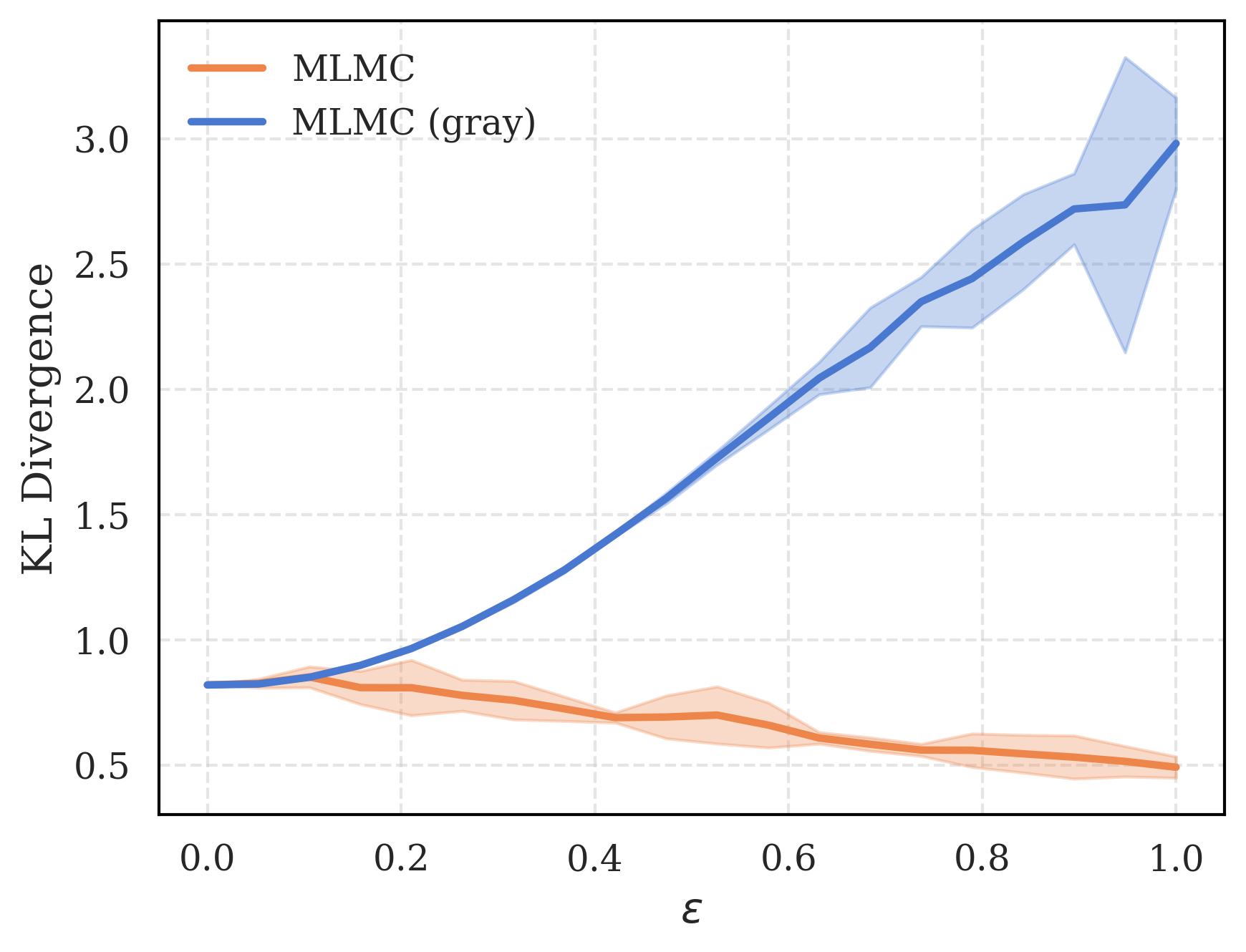}
    \subcaption{Attack targeting the full PPD.}\label{subfig:gray_distr}\par
    \end{multicols}
\caption{SEP for gray-box attacks.}
\label{fig:gray}
\end{figure}

\section{ADDITIONAL RESULTS FOR SECTION 5.2}\label{app:three}

Section 5.2 evaluated the effectiveness of attacks on three real-world datasets %(Wine Quality, Energy Efficiency, and California Housing) 
 using a linear regression model with a normal-inverse gamma prior. For point prediction attacks, the goal was to shift the PPD mean to twice the train set mean of the response variable, while full PPD attacks aimed to generate a PPD with double the mean and four times the variance of the original PPD.

For point prediction attacks, we presented the root mean squared deviation between the induced posterior predictive mean and its adversarial target on test instances across different attack intensities: 0.0 (no attack), 0.2, and 0.5, showing that higher attack intensities consistently move the induced mean closer to the adversarial target across all datasets. 
Table \ref{table:datasets_point_RMSE_R} instead shows the root mean squared error (RMSE) for each intensity and dataset, representing the deviation between the predictive mean under corrupted data and the true response on the test set. As shown, the attacks not only succeed in shifting the predictive mean toward the adversarial target but also degrade the predictor’s performance, with RMSE notably increasing as attack intensity grows.

\begin{table}[htbp]
\centering
\begin{tabular}{lccc}
\hline
\textbf{Dataset} & \textbf{0.0} & \textbf{0.2} & \textbf{0.5} \\
\hline
\textbf{Wine}    & $0.90 \pm 0.15$   & $2.04 \pm 0.27$  & $4.75 \pm 0.55$ \\
\textbf{Energy}  & $0.80 \pm 0.10$   & $1.38 \pm 0.16$  & $2.90 \pm 0.16$ \\
\textbf{Housing} & $0.80 \pm 0.08$   & $2.31 \pm 0.13$  & $2.37 \pm 0.12$ \\
\hline
\end{tabular}
\caption{Predictor's problem RMSE for attacks targeting point predictions against models on all datasets for different $\epsilon$ attack intensities.}
\label{table:datasets_point_RMSE_R}
\end{table}

Similarly, for attacks targeting the full PPD, Section 5.2 presented the KL divergence between the APPD and the PPD induced by the attack across all test instances, for each dataset and attack intensity. Higher attack intensities resulted in PPDs that more closely resembled the target distribution. In contrast, Table \ref{table:datasets_point_KL_R} shows the KL divergence between the induced PPD and the untainted PPD for each attack intensity and dataset, demonstrating that stronger attacks lead to distributions increasingly distant from the original one in KL divergence terms.

\begin{table}[htbp]
\centering
\begin{tabular}{lccc}
\hline
\textbf{Dataset} & \textbf{0.0} & \textbf{0.2} & \textbf{0.5} \\
\hline
\textbf{Wine}    & $0 \pm 0$   & $1.30 \pm 0.12$  & $7.55 \pm 0.43$ \\
\textbf{Energy}  & $0 \pm 0$   & $0.47 \pm 0.04$  & $2.98 \pm 0.34$ \\
\textbf{Housing} & $0 \pm 0$    & $2.85 \pm 0.33$   & $11.35 \pm 1.44$ \\
\hline
\end{tabular}
\caption{Predictor's problem KL for attacks targeting the full PPD against models on all datasets for different $\epsilon$ attack intensities.}
\label{table:datasets_point_KL_R}
\end{table}

%%%%%%%%%%%%%%%%%%%%%%%%%%%%%%%%%%%%%%%%%
\section{ADDITIONAL RESULTS FOR SECTION 5.3}\label{app:wine}

Figure \ref{fig:point_expy2_eps} presents the squared differences between the predictive utility of the wine under tainted indicators and the target utility, for several point attack intensities $ \epsilon $. Figure \ref{subfig:wine_PPD_sec} represents the estimated KL divergence between the objective PPD and the induced PPD for several attack intensities to the full PPD. 
\begin{figure}[h!]
\begin{multicols}{2}
    \includegraphics[width=0.8\linewidth]{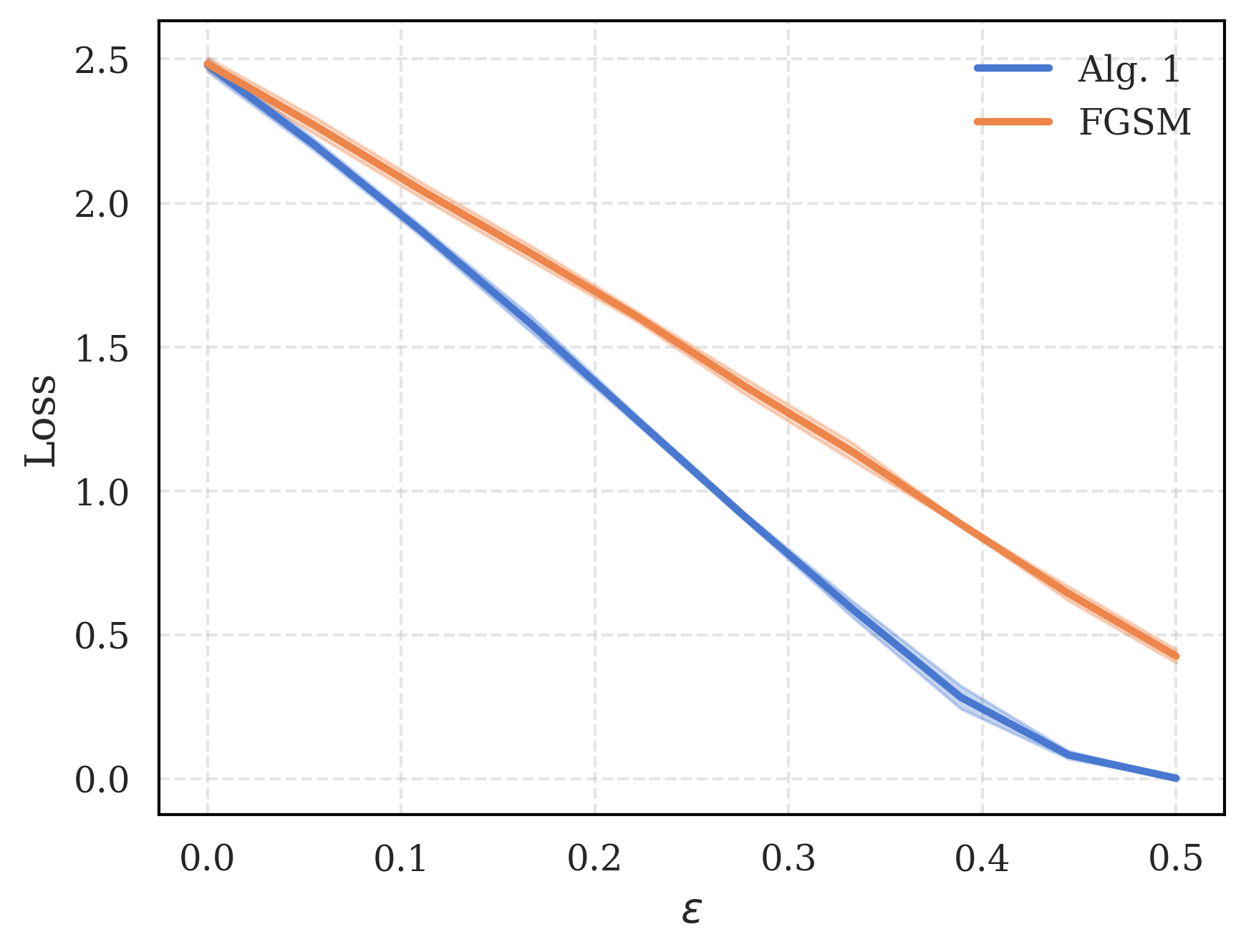}
    \subcaption{SEP of point attack.}\label{fig:point_expy2_eps}\par 
    \includegraphics[width=0.8\linewidth]{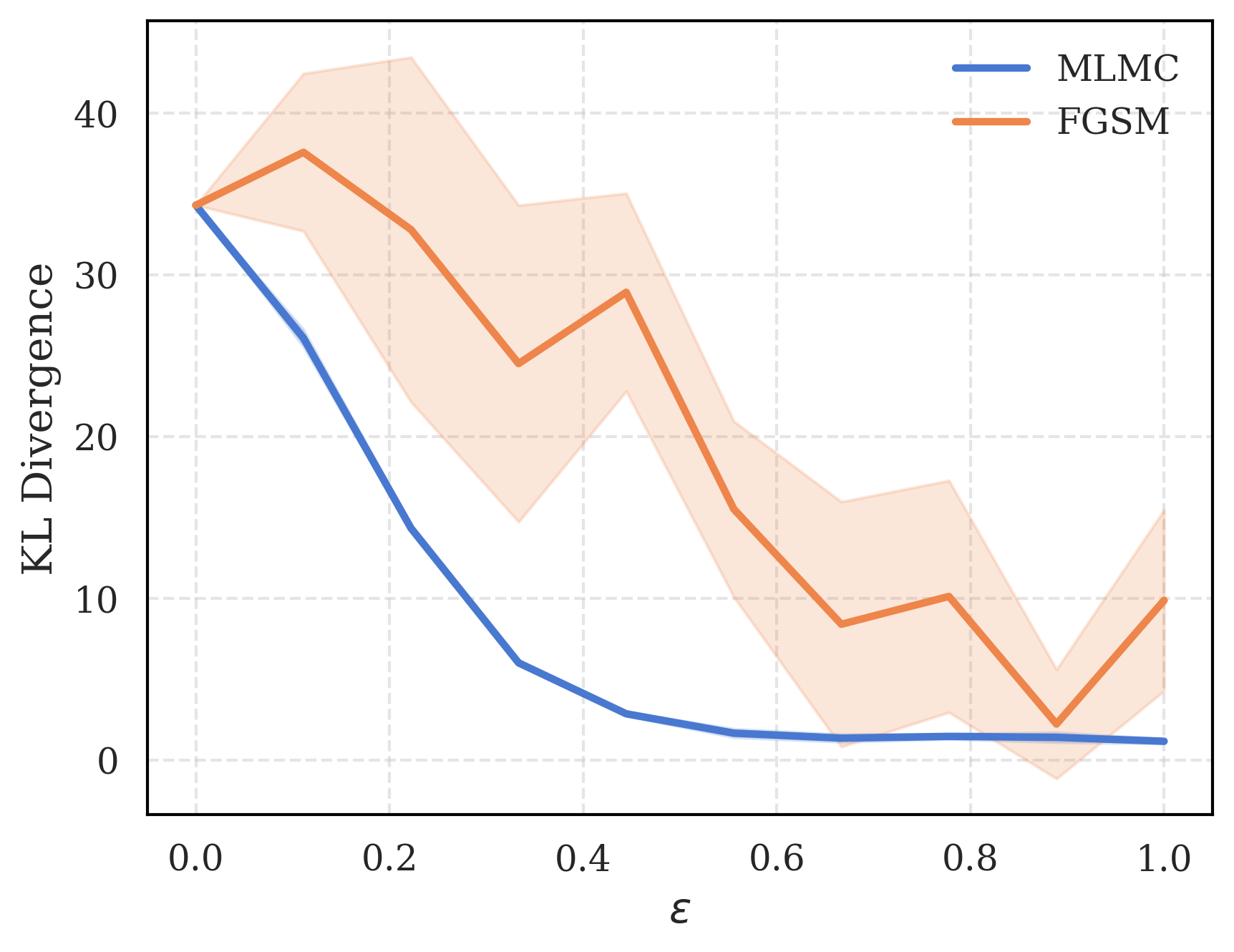}
    \subcaption{SEP of PPD attack.}\label{subfig:wine_PPD_sec}\par
    \end{multicols}
\caption{Attacks in the wine case-study.}
\label{fig:variance}
\end{figure}

Table \ref{tab:perturbations} presents the original and modified quality indicators found by our attack under $ L_2 $ and $ L_1 $ norm constraints with $\epsilon=0.3$. Attacks constrained by the $ L_1 $ norm result in sparser modifications compared to those with the $ L_2 $ norm. This sparsity is particularly relevant when the attacker has control over only certain covariates. The findings suggest that identifying which covariates influence more a specific predictive utility (i.e., those with non-zero perturbations) and prioritizing protection of these covariates from manipulation could serve as an effective defense.

% \begin{table*}[h!]
%     \centering
%     %\resizebox{0.9\textwidth}{!}{
%     \begin{tabular}{c|ccccccccccc}
%         \hline
%         \textbf{x} & 0.1635 & 0.0392 & 0.1988 & 0.0061 & 0.0861 & 0.0732 & 0.2831 & 0.0873 & 0.4818 & 0.2674 & 0.2903 \\ \hline
%         $L_1$-norm Attack & -0.0057 & 0.0169 & 0.0000 & -0.0809 & 0.0082 & 0.0000 & 0.0000 & 0.1764 & 0.0000 & 0.0000 & -0.0119 \\ \hline
%         $L_2$-norm Attack & -0.0569 & 0.0729 & -0.0185 & -0.1333 & 0.0544 & -0.0352 & 0.0499 & 0.2260 & -0.0282 & -0.0338 & -0.0605 \\ \hline
%     \end{tabular}
%     %}
%     \caption{Original and perturbed quality indicators under attacks with $L_1$ and $L_2$ Norm Constraints.}
%     \label{tab:perturbations}
% \end{table*}

\begin{table*}[h!]
    \centering
    \begin{tabular}{c|ccccccccccc}
        \hline
        \textbf{x} & 0.164 & 0.039 & 0.199 & 0.006 & 0.086 & 0.073 & 0.283 & 0.087 & 0.482 & 0.267 & 0.290 \\ \hline
        $L_1$-norm Attack & -0.006 & 0.017 & 0.000 & -0.081 & 0.008 & 0.000 & 0.000 & 0.176 & 0.000 & 0.000 & -0.012 \\ \hline
        $L_2$-norm Attack & -0.057 & 0.073 & -0.019 & -0.133 & 0.054 & -0.035 & 0.050 & 0.226 & -0.028 & -0.034 & -0.061 \\ \hline
    \end{tabular}
    \caption{Original and perturbed quality indicators under attacks with $L_1$ and $L_2$ Norm Constraints.}
    \label{tab:perturbations}
\end{table*}

%%%%%%%%%%%%%%%%%%%%%%%%%%%%%%%%%%%%%%%%%
\FloatBarrier
\section{ADDITIONAL RESULTS FOR SECTION 5.4}\label{app:mnist}

 Figure \ref{fig:mnist_seps} represents the PPD entropy for ID and OOD samples for several attack intensities under point and distribution attacks, respectively for the attacks given by Algorithms \ref{alg1} and \ref{alg2} (Figure \ref{fig:sep}) and their FGSM-inspired simplifications (Figure \ref{subfig:sep_fgsm}). Figure \ref{fig:sep} shows that both attacks present similar performance on OOD samples, while the distribution attack outperforms the point attack on ID samples.

\begin{figure}[h!]
\begin{multicols}{2}
    \includegraphics[width=0.8\linewidth]{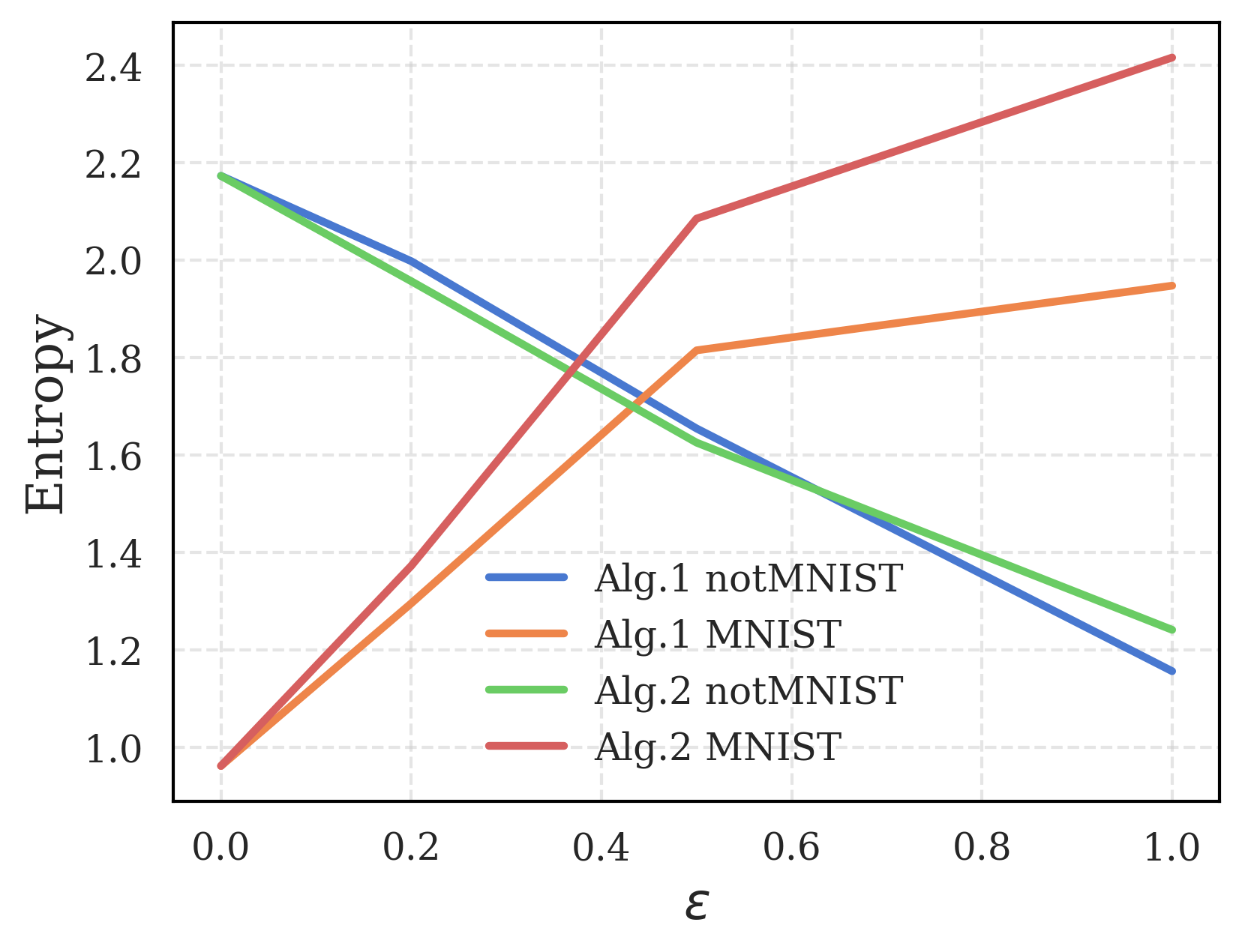}
    \subcaption{SEP of point and distribution attacks.}\label{fig:sep}\par 
    \includegraphics[width=0.8\linewidth]{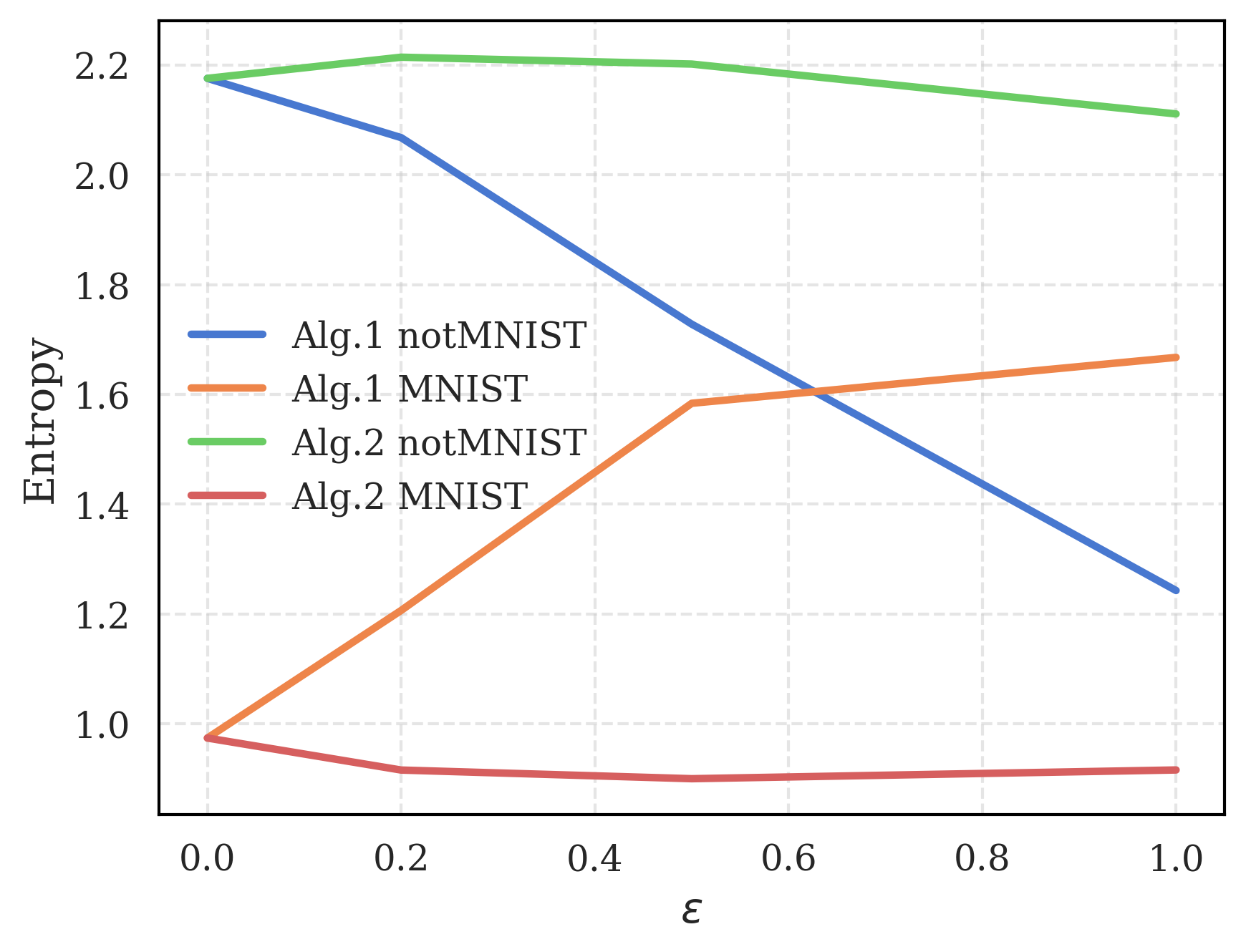}
    \subcaption{SEP of FGSM-inspired simplification of our attacks.}\label{subfig:sep_fgsm}\par
    \end{multicols}
\caption{Attacks in the MNIST case-study.}
\label{fig:mnist_seps}
\end{figure}

 Finally, we perform a point attack to the expected entropy of the deep ensemble model, assuming that samples are drawn from a Bayesian PPD. This requires minimal adaptation of Algorithm \ref{alg1}. Figure \ref{fig:point_entrop_de} presents  results.

\begin{figure}[ht]
\begin{multicols}{2}
    \includegraphics[width=0.95\linewidth]{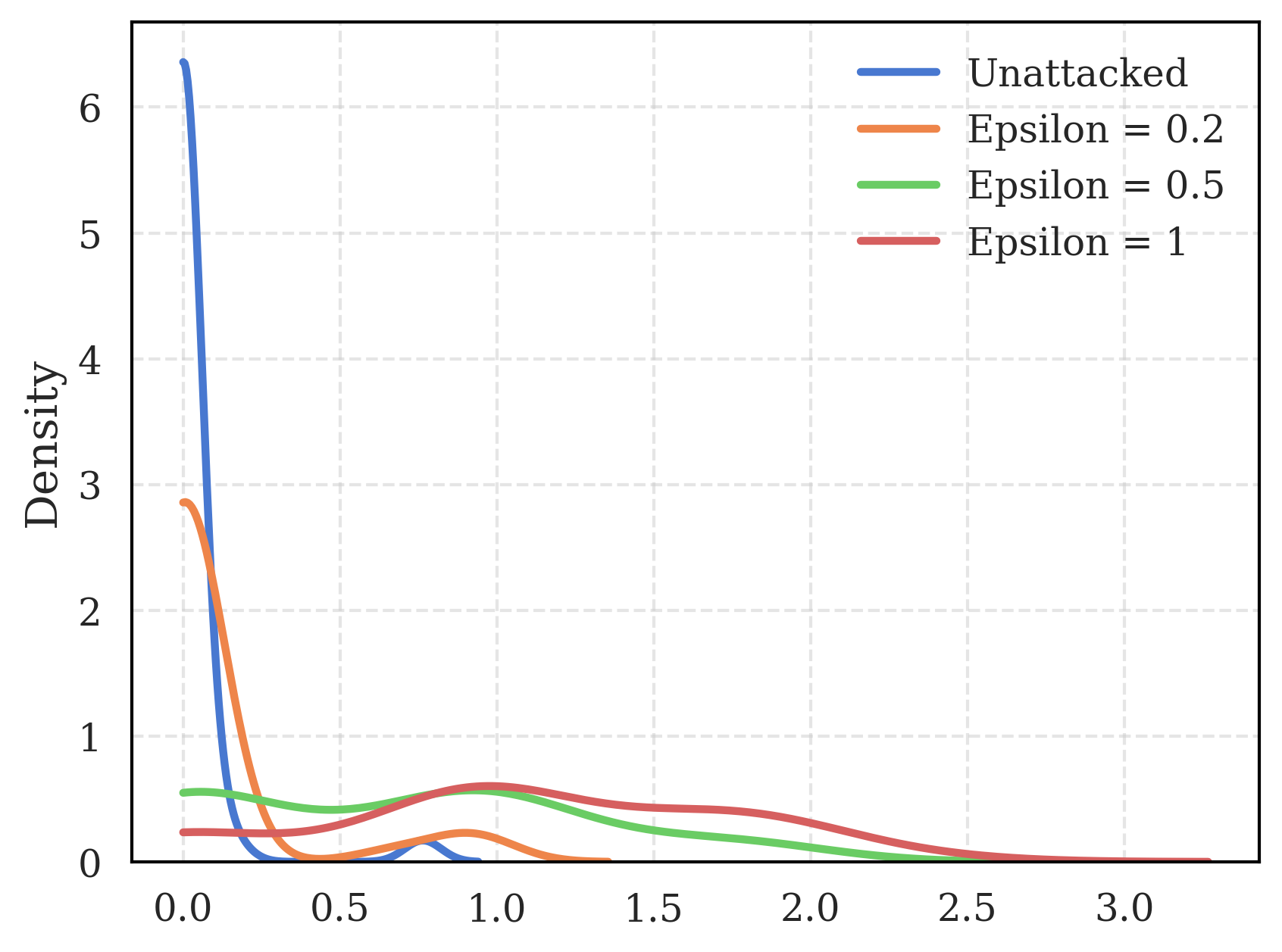}
    \subcaption{Inflating the entropy for MNIST data.}\label{fig:point_rise_de}\par  
    \includegraphics[width=0.95\linewidth]{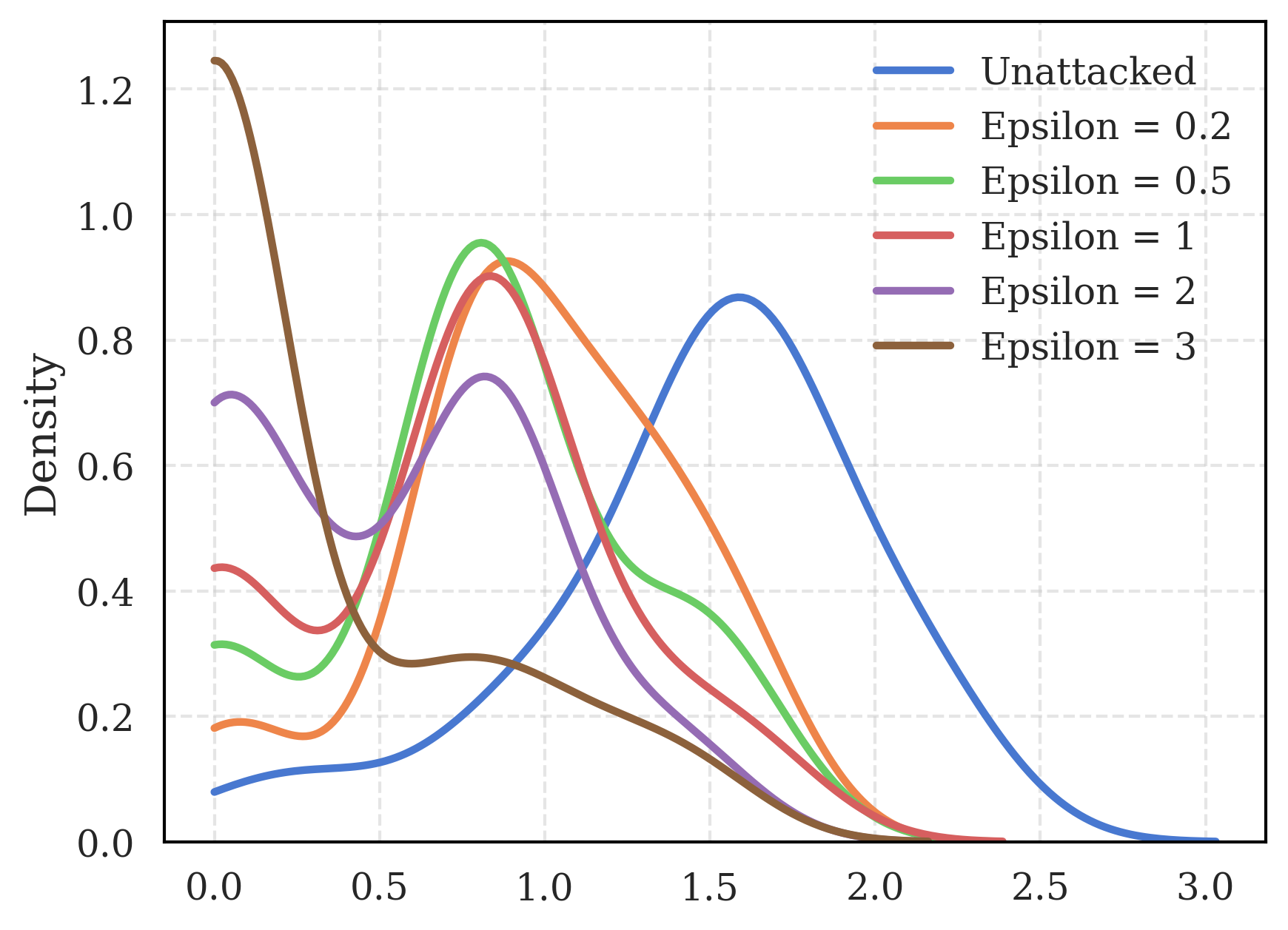}
    \subcaption{Lowering the entropy for notMNIST data.}\label{fig:point_lower_de}\par 
    \end{multicols}
\caption{Attacks to the expected entropy over the deep ensemble samples, assumed to be from the PPD.}
\label{fig:point_entrop_de}
\end{figure}

\end{document}